\documentclass{article}


\usepackage[accepted]{icml2020}

\usepackage{amsfonts}
\usepackage{amsmath}
\usepackage{amsthm}
\usepackage{amssymb}
\usepackage{bm}
\usepackage{bbm}
\usepackage[T1]{fontenc}
\usepackage{graphicx}
\usepackage[labelfont=bf]{caption}
\usepackage{dblfloatfix}

\usepackage{xcolor}
\usepackage{colortbl}
\usepackage{multirow,makecell}
\usepackage{pifont}
\usepackage{booktabs}
\usepackage{thm-restate}
\usepackage{enumitem}
\usepackage{wrapfig}
\usepackage{url}
\usepackage{graphicx}
\usepackage{microtype}
\usepackage{subfigure}
\usepackage{hyperref}

\usepackage{natbib}

\usepackage{algorithm}
\usepackage[noend]{algpseudocode}
\algrenewcommand\algorithmicindent{0.173in}
\usepackage{float}
\newfloat{algorithm}{t}{lop}
\floatname{algorithm}{Algorithm}

\newcommand{\minimizel}[1]{\mathrm{minimize}_{#1} \ }

\newcommand{\ip}[1]{\langle #1 \rangle}
\newcommand{\ceil}[1]{\lceil #1 \rceil}
\newcommand{\floor}[1]{\lfloor #1 \rfloor}
\newcommand{\norm}[1]{\left\| #1 \right\|}
\newcommand{\norms}[1]{\| #1 \|}
\newcommand{\E}[1]{\mathbb{E}\left[ #1 \right]}
\newcommand{\Eu}[2]{\mathbb{E}_{#1}\left[ #2 \right]}
\newcommand{\Eus}[2]{\mathbb{E}_{#1}[ #2 ]}
\newcommand{\Es}[1]{\mathbb{E}[ #1 ]}

\newcommand{\algcomment}[1]{\State \emph{\# #1:} \vskip 0.01in}

\definecolor{gray}{HTML}{AAAAAA}
\definecolor{mgray}{HTML}{888888}
\definecolor{lgray}{HTML}{FFFFFF}
\definecolor{dgreen}{HTML}{084F28}
\definecolor{dred}{HTML}{8B0000}

\newcommand{\gr}[1]{\textcolor{mgray}{#1}}

\def\reals{\mathbb{R}}

\def\cifar{$\mathtt{cifar10}$}
\def\imagenet{$\mathtt{imagenet}$}
\def\deepspeech{$\mathtt{speech}$}
\def\yolo{$\mathtt{yolo}$}
\def\transformer{$\mathtt{transformer}$}

\newcommand{\m}[1]{\mathbf{#1}}

\def\w{\m{w}}
\def\xtil{\tilde{\x}}
\def\ftil{\tilde{f}}

\def\wtp{\w_{t+1}}

\def\wz{\w_{0}}
\def\wtm{\w_{t-1}}

\def\wt{\w_{t}}
\def\wT{\w_{T}}

\def\MBt{\mu_{\g}^2(\wt)}
\def\VB{\sigma_{\g}^2(\w)}
\def\VBtm{\sigma_{\g}^2(\wtm)}
\def\SB{\bm{\Sigma}_{\g}(\w)}

\def\VBt{\sigma_{\g}^2(\wt)}

\def\gbart{\bar{\g}_{t}}

\def\xt{\x_{t}}

\def\xibt{\bm{\xi}_t}

\def\algname{Ada\-Scale}
\def\fullalgname{Ada\-Scale SGD}
\def\shortalgname{Ada\-Scale}
\def\algline{\vskip -0.06in \hspace*{0.0in}{\color{gray} \hrulefill} \vskip -0.08in}
\def\falgline{\vskip -0.1in \hspace*{-0.395in}{\hrulefill} \vskip -0.08in}

\def\etat{{\eta_{t}}}

\def\etasq{\eta^2}

\def\g{\m{g}}
\def\gt{\g_{t}}
\def\gtm{\bar{\g}_{t-1}}

\def\rt{r_{t}}
\def\rtp{r_{t'}}
\def\rtm{r_{t-1}}

\def\TB{T_{\mathrm{SI}}}
\def\TSo{T_{S1}}

\def\DeltaB{\Delta}

\def\x{\m{x}}
\def\X{\mathcal{X}}

\def\taut{\tau_{t}}
\def\tautp{\tau_{t+1}}

\def\tauz{\tau_{0}}

\def\Ft{\tilde{F}}
\def\LSS{LSW+}
\def\vsp{\vspace{-0.1em}}
\def\vspf{\vspace{-0.5em}}

\def\tlp{\textrm{(}}
\def\trp{\textrm{)}}
\newcommand{\lrsched}[1]{\mathtt{lr}\tlp #1 \trp}

\def\lrschedname{\mathtt{lr}}
\def\Slrschedname{\mathtt{lr}_S}
\def\ST{T_S}
\def\Olrschedname{\mathtt{lr}_1}
\def\OT{T_1}

\renewcommand{\lim}[1]{\underset{#1}{\mathrm{lim}}}

\newcommand{\eqnref}[1]{(\ref{#1})}
\newcommand{\prbref}[1]{Problem \ref{#1}}
\renewcommand{\algref}[1]{Algorithm~\ref{#1}}
\newcommand{\figref}[1]{Figure~\ref{#1}}
\newcommand{\secref}[1]{\S\ref{#1}}
\newcommand{\appref}[1]{Appendix~\ref{#1}}
\newcommand{\apprefs}[1]{Appendix~\ref{#1}}
\newcommand{\tblref}[1]{Table~\ref{#1}}
\newcommand{\assref}[1]{Assumption~\ref{#1}}
\newcommand{\thmref}[1]{Theorem~\ref{#1}}
\newcommand{\lemref}[1]{Lemma~\ref{#1}}

\newcommand{\propref}[1]{Proposition~\ref{#1}}

\theoremstyle{plain}

\newtheorem{dfn}{Definition}
\newtheorem{ass}{Assumption}

\newtheorem{lem}{Lemma}

\makeatletter
\def\@fnsymbol#1{\ensuremath{\ifcase#1\or \dagger\or \ddagger\or
\mathsection\or \mathparagraph\or \|\or **\or \dagger\dagger
\or \ddagger\ddagger \else\@ctrerr\fi}}
\makeatother

\icmltitlerunning{\fullalgname}

\begin{document}

\twocolumn[
\icmltitle{
    \fullalgname{}: 
    A User-Friendly
    Algorithm for Distributed Training
}



\icmlsetsymbol{equal}{$\dagger$}

\begin{icmlauthorlist}
\icmlauthor{Tyler B. Johnson}{equal,apple}
\icmlauthor{Pulkit Agrawal}{equal,apple}
\icmlauthor{Haijie Gu}{apple}
\icmlauthor{Carlos Guestrin}{apple}
\end{icmlauthorlist}

\icmlaffiliation{apple}{Apple, Seattle, WA}

\icmlcorrespondingauthor{T. Johnson}{tbjohns@apple.com}
\icmlcorrespondingauthor{P. Agrawal}{pulkit\_agrawal@apple.com}

\icmlkeywords{Machine Learning, ICML}

\vskip 0.3in
]



\printAffiliationsAndNotice{\icmlEqualContribution} 

\begin{abstract}
When using large-batch training to speed up stochastic gradient descent, learning rates must adapt to new batch sizes in order to maximize speed-ups and preserve model quality.  Re-tuning learning rates is resource intensive, while fixed scaling rules often degrade model quality.  We propose \fullalgname{}, an algorithm that reliably adapts learning rates to large-batch training.  By continually adapting to the gradient's variance, \algname{} automatically achieves speed-ups for a wide range of batch sizes.  We formally describe this quality with \algname{}'s convergence bound, which maintains final objective values, even as batch sizes grow large and the number of iterations decreases.  In empirical comparisons, \algname{} trains well beyond the batch size limits of popular ``linear learning rate scaling'' rules.  This includes large-batch training with no model degradation for machine translation, image classification, object detection, and speech recognition tasks.  \algname{}'s qualitative behavior is similar to that of ``warm-up'' heuristics, but unlike warm-up, this behavior emerges naturally from a principled mechanism.  The algorithm introduces negligible computational overhead and no new hyperparameters, making \algname{} an attractive choice for large-scale training in practice.
\end{abstract}

\section{Introduction} \label{sec:intro}

Large datasets and large models underlie
much of the recent success of
machine learning. 
Training such models
is time consuming, however, often requiring days or even weeks.
Faster training enables consideration of more data and models,
which expands the capabilities of machine learning.

Stochastic gradient descent and its variants are fundamental
training algorithms.
During each iteration, SGD applies a small and noisy update to the
model, based on a stochastic gradient.
By applying thousands of such updates in sequence,
a powerful model can emerge over time.

To speed up SGD,
a general strategy is to improve
the signal-to-noise ratio of each update, i.e.,
reduce the variance of the stochastic gradient.
Some tools for this include
SVRG-type gradient estimators \citep{Johnson:2013,Defazio:2014}
and prioritization of training data via
importance sampling \citep{Needell:2014,Zhao:2015}.
Data parallelism---our focus in this work---is perhaps
the most powerful of such tools.
By processing thousands of training examples
for each gradient estimate, distributed systems can
drastically lower the gradient's variance.
Only the system's scalability, not the 
algorithm, limit the variance reduction.

Smaller variances alone, however,
typically result in unimpressive speed-ups.
To fully exploit the noise reduction, SGD must also
make larger updates, i.e.,
increase its \emph{learning rate} parameter.
While this fact applies universally 
(including for SVRG \citep{Johnson:2013}, 
data parallelism \citep{Goyal:2017}, 
and importance sampling \citep{Johnson:2018}),
the precise relationship between
variance and learning rates remains poorly quantified.

For this reason, practitioners often turn
to heuristics in order to adapt learning rates.
This applies especially to distributed training,
for which case ``fixed scaling rules''
are standard but unreliable strategies.
For example, \citet{Goyal:2017} popularized
``linear learning rate scaling,''
which succeeds 
in limited cases
\citep{Krizhevsky:2014,Devarakonda:2017,Jastrzebski:2018,Smith:2018,Lin:2019}.
For other problems or greater scales, however, linear scaling
leads to poor model quality and even divergence---a result
known both in theory \citep{Yin:2018,Jain:2018,Ma:2018} and in practice \citep{Goyal:2017}.

In this work, we introduce \fullalgname{},
an algorithm that more reliably adjusts learning rates
for large-batch training.
\algname{} achieves 
speed-ups that depend naturally 
on the gradient's variance before it is decreased---nearly perfect 
linear speed-ups
in cases of large variance, and smaller speed-ups otherwise.
For contrast, we also show that as batch sizes increase,
linear learning rate scaling progressively degrades model
quality, even when including
``warm-up'' heuristics \citep{Goyal:2017} and additional training iterations.

\begin{figure*}[t]
\begin{center}
\small{
\begin{tabular}{
    @{}
    >{\raggedleft\arraybackslash}p{0.10in}
    @{\hspace{0.02in}}
    c
    @{\hspace{0.05in}}
    >{\raggedleft\arraybackslash}p{0.10in}
    @{\hspace{0.02in}}
    c
    @{\hspace{0.05in}}
    >{\raggedleft\arraybackslash}p{0.10in}
    @{\hspace{0.02in}}
    c
    @{}|@{}
    >{\raggedleft\arraybackslash}p{0.165in}
    @{\hspace{0.02in}}
    c
    @{}
    }
    \multicolumn{6}{c|}{\hspace{0.4in} \algname{}} & \multicolumn{2}{c}{\hspace{0.22in} Linear scaling}
\\[-0.2em]
\rotatebox{90}{\hspace{0.16in} Val. Acc (\%)} &
\includegraphics[width=1.265025in]{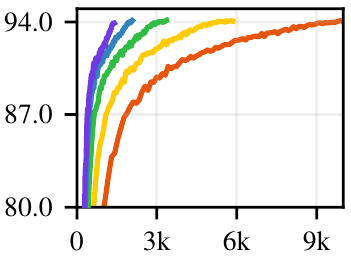} &
\rotatebox{90}{\hspace{0.16in} Val. Acc (\%)} &
\includegraphics[width=1.265025in]{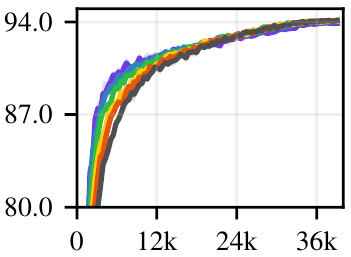} &
\rotatebox{90}{\hspace{0.10in} Learning rate $\etat$} &
\includegraphics[width=1.2149in]{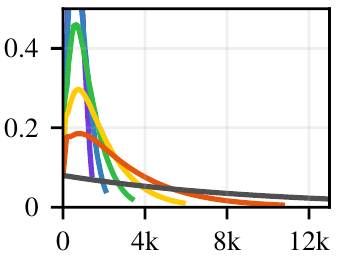} 
\hspace{0.025in}
& 
\rotatebox{90}{\hspace{0.16in} Val. Acc (\%)} &
\includegraphics[width=1.265025in]{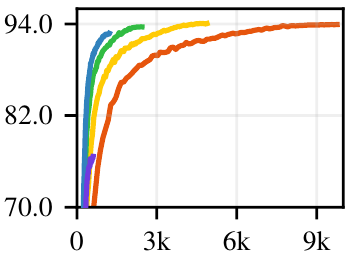} \\[-0.23em]
&  \hspace{0.22in}{Iteration} &
&  \hspace{0.25in}{Scale-invariant itr.} & 
&  \hspace{0.22in}{Iteration} &
& \hspace{0.22in} {Iteration} \\[-0.1em]
\multicolumn{8}{@{}c@{}}{
\includegraphics[width=4.7in]{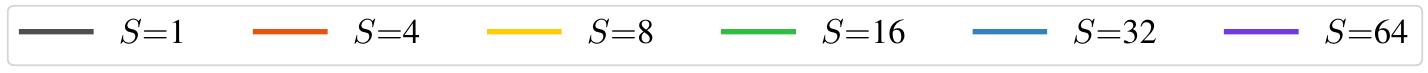} 
} \\[-1em]
\end{tabular}
}
\end{center}
\caption{\textbf{Motivating results.}
For \cifar{},
\algname{} 
automatically adapts to many scales $S$ (the number of parallel 
batches),
preserving model quality.
  When plotted in terms of ``scale-invariant'' iterations,
  training curves align closely.
  With \algname{},
  warm-up behavior emerges naturally when adapting
  a simple learning rate schedule (exponential decay) to scale $S$ (plot 
  cropped to show behavior).
  Meanwhile, linear scaling (with warm-up)
  degrades model quality as batch sizes increase.
}
\label{fig:cifar10_intro}       
\vspf
\end{figure*}

\algname{} makes
large-batch training significantly more user-friendly.
Without changing the algorithm's inputs (such as learning rate schedule),
\algname{} can adapt to vastly different batch sizes with
large speed-ups and near-identical model quality.
This has two important implications:
(i)~\algname{} improves the translation of learning rates
to greater amounts of parallelism, which simplifies scaling up
tasks or adapting to dynamic resource availability; and (ii)~\algname{}
works with simple learning rate schedules at scale,
eliminating the need for warm-up.
Qualitatively,
\algname{} and warm-up have similar effects on learning rates,
but with \algname{}, the behavior emerges from a
principled and adaptive strategy, not hand-tuned parameters.

We perform large-scale empirical evaluations
on five training benchmarks.
Tasks include image classification,
machine translation, object detection, and speech recognition.
The results align remarkably well with our theory, as  \algname{}
systematically preserves model quality across many batch sizes.
This includes training ImageNet with batch size 32k
and Transformer with 262k max tokens per batch.

The ideas behind \algname{} are not limited to distributed training,
but instead apply to any estimator that reduces the gradient's variance.
To provide context for
the remaining sections, \figref{fig:cifar10_intro} includes results from
an experiment using CIFAR-10 data.
These results illustrate \algname{}'s scaling behavior,
the qualitative impact on learning rates,
and a failure case for the linear scaling rule.

\section{Problem formulation}
\label{sec:formulation}

We focus on quickly and approximately solving
\begin{equation} \label{prb:general} 
\minimizel{\w \in \reals^d} F(\w) := \mathbb{E}_{\x \sim \mathcal{X}}\left[ f(\w, \x) \right] \, .
\end{equation}
Here $\w$ parameterizes a machine learning model, while $\mathcal{X}$ denotes a distribution
over batches of training data.
We assume that $F$ and $f$ are differentiable
and that $\mathbb{E}_{\x \sim \mathcal{X}}\left[ \nabla_{\w} f(\w, \x) \right] = \nabla F(\w)$.
  
SGD is a fundamental algorithm for solving \prbref{prb:general}.
Let $\wt$ denote the model parameters when iteration $t$ begins.
SGD samples
a batch $\xt \sim \X$ and computes the gradient $\gt \gets \nabla_\w f(\wt, \xt)$,
before applying the update
\begin{equation} \label{eqn:sgd_update}
 \wtp \gets \wt - \etat \gt \, . 
\end{equation}
Here $\etat$ is the \emph{learning rate}.
Given a learning rate schedule
$\lrschedname \, : \, \mathbb{Z}_{\geq 0} \rightarrow \reals_{> 0}$,
SGD defines 
$\etat = \lrsched{t}$.
For our experiments in \secref{sec:empirical}, $\lrschedname$ is an
exponential decay or step decay function.
SGD completes training  after $T$ iterations.

To speed up SGD, we can process multiple batches in parallel.
\algref{alg:sgd} defines a ``scaled SGD'' algorithm.
At scale $S$,
we sample $S$ independent batches per iteration.
After computing the gradient for each batch in parallel, the algorithm synchronously applies
the mean of these gradients (in place of $\gt$ in \eqnref{eqn:sgd_update}) when updating the model.

As $S$ increases, scaled SGD generally requires fewer iterations to train a model.
But how much speed-up should we expect, especially as $S$ becomes large?
Moreover, how do we adapt learning rates in
order to realize this speed-up?
Practitioners usually answer these questions with one or two approaches: (i)~trial-and-error
parameter tuning, which requires significant time and specialized knowledge, or 
(ii)~fixed scaling rules, which work well for some problems, but result in poor model quality for many others.

\section{\fullalgname{} algorithm} \label{sec:gain_scaling}

In this section, we introduce \algname{}.
We can interpret \algname{}
as a per-iteration interpolation
between two simple rules for scaling the learning rate.
For each of these rules,
we first consider an extreme problem setting
for which the rule behaves ideally.
With this understanding, we then define \algname{} and provide theoretical results.
Finally, we discuss approximations needed for a practical implementation.

\subsection{Intuition from extreme cases of gradient variance} \label{sec:interpolation}

\begin{figure*}
\begin{minipage}[t]{0.495\textwidth}
\input{algorithms/base_sgd}
\end{minipage}
\hfill
\begin{minipage}[t]{0.495\textwidth}
\input{algorithms/gain_scaling}
\end{minipage}
\end{figure*}

To help introduce \algname{},
we first consider two
simple ``scaling rules.'' A scaling rule translates training
parameters (such as learning rates) to large-batch settings:
\begin{dfn}
Consider a 
learning rate schedule $\Olrschedname$ and total steps $\OT$ for
single-batch training (i.e., \algref{alg:sgd} with $S=1$).
Given a scale $S$,
a \textbf{scaling rule} maps $(S, \Olrschedname, \OT)$
to parameters $(\Slrschedname, \ST)$ for
training at scale $S$.
\end{dfn}
One scaling rule is identity scaling,
which keeps training parameters unchanged for all $S$:
\begin{dfn}
The \textbf{identity scaling rule}
defines 
$\ST = \OT$ and
$\Slrschedname = \Olrschedname$.
\end{dfn}
Note that this rule has little practical appeal, 
since it fails to reduce the number of training iterations.
A more popular scaling rule is linear learning rate scaling:
\begin{dfn}\label{dfn:ls}
The \textbf{linear scaling rule} 
defines
$\ST = \ceil{\OT / S}$ and
$\Slrschedname(t) = S \cdot \Olrschedname(St)$.
\end{dfn}
Conceptually, linear scaling treats SGD
as a perfectly parallelizable algorithm.
If true,
applying updates
from $S$ batches in parallel achieves the same result
as doing so in sequence.

For separate special cases of \prbref{prb:general},
the identity and linear scaling rules 
maintain training effectiveness for all $S$.
To show this, we first define the
gradient variance quantities
\[
\begin{array}{c}
\SB = \mathrm{cov}_{\x \sim \X}(\nabla_{\w} f(\w, \x), \nabla_{\w} f(\w, \x)) \, , \\ \\[-0.8em]
\text{and} \quad \VB = \mathrm{tr}(\SB) \, .
\end{array}
\]
In words, $\VB$
sums the variances of each entry in $\nabla_{\w} f(\w, \x)$.
Data parallelism fundamentally impacts SGD by reducing this variance.

We first consider the case of deterministic gradients, i.e., $\VB = 0$.
Here identity scaling preserves model quality:
\begin{restatable}[Identity scaling for zero variance]{prop}{identityprop}
\label{prop:identity}
Let $\w^{(1)}$ denote the result
of single batch training, and let $\w^{(S)}$
denote the result of scaled training
after identity scaling.
If $\VB = 0$ for all $\w \in \reals^d$, then
$F(\w^{(1)}) = F(\w^{(S)})$.
\end{restatable}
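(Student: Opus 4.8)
The plan is to observe that the hypothesis $\VB = 0$ for every $\w$ eliminates all stochasticity from both training runs, collapsing each to the \emph{same} deterministic gradient descent. First I would argue that zero variance forces the sampled gradient to coincide almost surely with the full gradient $\nabla F(\w)$. Since $\SB$ is a covariance matrix, it is positive semidefinite, and its trace is exactly $\VB$; a positive semidefinite matrix with zero trace must be the zero matrix, so $\SB = \m{0}$. Equivalently, every coordinate of $\nabla_\w f(\w, \x)$ has zero variance and therefore equals its mean almost surely, giving $\nabla_\w f(\w, \x) = \nabla F(\w)$ for $\x \sim \X$ (a.s.), for all $\w \in \reals^d$.

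Next I would substitute this identity into the two update rules of \algref{alg:sgd}. For single-batch training ($S = 1$), the stochastic gradient satisfies $\gbart = \nabla F(\wt)$ almost surely, so the update reduces to $\wtp = \wt - \etat \nabla F(\wt)$. For scaled training, the mean gradient is $\tfrac{1}{S} \sum_{i=1}^{S} \nabla_\w f(\wt, \x^{(i)})$; since each summand almost surely equals $\nabla F(\wt)$, the average does too, and the update is again $\wtp = \wt - \etat \nabla F(\wt)$. Crucially, the identity scaling rule sets $\Slrschedname = \Olrschedname$ and $\ST = \OT$, so both runs use the identical learning-rate sequence $\{\etat\}$ and the identical number of iterations.

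I would then close the argument by induction on $t$. Both runs begin at the common initialization $\wz$, and at every step they apply the exact same deterministic map $\wt \mapsto \wt - \etat \nabla F(\wt)$ with the same $\etat$. Hence the two iterate sequences coincide almost surely, and in particular $\w^{(1)} = \w_{\OT} = \w^{(S)}$, which yields $F(\w^{(1)}) = F(\w^{(S)})$.

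I do not expect any real obstacle here: the statement is essentially a consistency check confirming that identity scaling is exact in the noiseless regime. The only point requiring a little care is the measure-theoretic step from $\trace{\SB} = 0$ to almost-sure equality of the gradients, together with the accompanying observation that the conclusion $F(\w^{(1)}) = F(\w^{(S)})$ holds almost surely over the random batches rather than surely.
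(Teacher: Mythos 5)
Your proposal is correct and follows essentially the same route as the paper's proof: zero variance forces $\mathtt{compute\_gradient}$ to return $\nabla F(\wt)$ regardless of $S$, so under identity scaling both runs execute the identical deterministic iteration and produce the same final iterate. Your version merely fills in the details the paper leaves implicit (the trace-zero $\Rightarrow$ almost-sure-equality step and the induction over iterations), which is a reasonable elaboration rather than a different argument.
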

Although identity scaling does not speed up
training, \propref{prop:identity}
is important for framing the impact of increasing batch sizes.
If the gradient variance is ``small,'' then we
cannot expect large gains from scaling up training---further reducing
the variance
has little effect on $\gbart$.
With ``large'' variance, however, the opposite is true:

\begin{restatable}[Linear scaling for large variance]{prop}{linearprop} \label{prop:linear}
Consider any learning rate $\eta$
and training duration $T$.
For some fixed covariance matrix $\bm{\Sigma} \in \mathbb{S}^d_{>0}$
and $\nu \in \mathbb{Z}_{> 0}$,
assume $\nabla_{\w} f(\w, \x) \sim \mathcal{N}(\nabla F(\w), \nu \bm{\Sigma})$.
Let $\Olrschedname(t) = \nu^{-1} \eta$
and $\OT = \nu T$.
Define $\w^{(1)}$ as the result
of single batch training and $\w^{(S)}$
as the result of scaled training
after linear scaling.
Then
$\E{F(\w^{(1)})} = \E{F(\w^{(S)})}$
in the limit $\nu \rightarrow +\infty$.
\end{restatable}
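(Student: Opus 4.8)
The plan is to recognize both training runs as Euler--Maruyama discretizations of a single stochastic differential equation (SDE), which they approximate ever more finely as $\nu \to \infty$, and then to identify the common limiting law of their final iterates. Because $\nabla F$ is nonlinear there is no closed form for the SGD iterates, so I would argue at the level of distributions rather than by a direct Gaussian calculation.

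First I would write out the two updates explicitly under the Gaussian assumption. For single-batch training ($S=1$) the learning rate is $\Olrschedname(t)=\nu^{-1}\eta$ and $\nabla_\w f(\wt,\xt) \sim \mathcal{N}(\nabla F(\wt),\nu\bm{\Sigma})$, so
\[
\wtp = \wt - \tfrac{\eta}{\nu}\nabla F(\wt) - \tfrac{\eta}{\nu}\bm{\xi}_t, \qquad \bm{\xi}_t \sim \mathcal{N}(0,\nu\bm{\Sigma}),
\]
which has per-step drift $-\tfrac{\eta}{\nu}\nabla F(\wt)$ and per-step covariance $\tfrac{\eta^2}{\nu}\bm{\Sigma}$, run for $\OT=\nu T$ steps. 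For linear-scaled training the averaged gradient satisfies $\gbart \sim \mathcal{N}(\nabla F(\wt),\tfrac{\nu}{S}\bm{\Sigma})$ and the rule gives $\Slrschedname(t)=S\Olrschedname(St)=S\nu^{-1}\eta$, yielding per-step drift $-\tfrac{S\eta}{\nu}\nabla F(\wt)$ and per-step covariance $\tfrac{S\eta^2}{\nu}\bm{\Sigma}$, run for $\ST=\ceil{\nu T/S}$ steps.

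The key observation is that, writing the step sizes as $h=\eta/\nu$ and $h'=S\eta/\nu$, both schemes have per-step drift $-h\nabla F$ (resp.\ $-h'\nabla F$) and per-step covariance $h\,\eta\bm{\Sigma}$ (resp.\ $h'\,\eta\bm{\Sigma}$) --- exactly the increment statistics of an Euler--Maruyama step for the single SDE
\[
d\w_s = -\nabla F(\w_s)\,ds + \sqrt{\eta\bm{\Sigma}}\,dB_s .
\]
Moreover both runs integrate this SDE over the same horizon: single-batch over $[0,T\eta]$ exactly, and linear-scaled over $[0,\ceil{\nu T/S}\,S\eta/\nu]$, which differs from $T\eta$ by at most $S\eta/\nu \to 0$. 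Thus linear scaling is precisely the rule that holds the diffusion coefficient $\eta\bm{\Sigma}$ and the total horizon $T\eta$ fixed while sending both $h,h' \to 0$. Invoking the standard weak-convergence theory for Euler--Maruyama schemes (under a Lipschitz assumption on $\nabla F$ ensuring the SDE is well-posed), both $\w^{(1)}$ and $\w^{(S)}$ then converge in distribution, as $\nu\to\infty$, to $\w_{T\eta}$, the solution of this common SDE at time $T\eta$.

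Finally I would transfer this to the objective: by continuity of $F$ and the continuous mapping theorem, $F(\w^{(1)})$ and $F(\w^{(S)})$ each converge in distribution to $F(\w_{T\eta})$, and uniform integrability (e.g.\ a polynomial-growth bound on $F$ together with uniform moment bounds on the Euler iterates) gives $\lim_{\nu\to\infty}\E{F(\w^{(1)})}=\E{F(\w_{T\eta})}=\lim_{\nu\to\infty}\E{F(\w^{(S)})}$, which is the claim. The main obstacle is the analytic justification of this last step: one must control the $\nu$-dependent discretization error and the vanishing ceiling correction uniformly enough to pass the limit through $\E{F(\cdot)}$, since the nonlinearity of $\nabla F$ precludes any exact computation and forces reliance on weak-convergence estimates rather than an explicit formula for the final iterate.
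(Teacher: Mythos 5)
Your proposal is correct and follows essentially the same route as the paper: both arguments identify the single-batch and linearly scaled iterations as discretizations of the common limiting SDE $d\w = -\nabla F(\w)\,ds + (\eta\bm{\Sigma})^{1/2}\,dB_s$ over the horizon $[0,\eta T]$, and conclude that the limiting distribution of the final iterate is independent of $S$. Your write-up is somewhat more explicit about the weak-convergence and uniform-integrability details needed to pass the limit through $\E{F(\cdot)}$, whereas the paper delegates these to a citation of Kloeden and Platen, but the underlying idea is identical.
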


In simpler terms,
linear scaling leads to perfect linear speed-ups
in the case of
very large gradient variance (as well as small learning rates
and many iterations, to compensate for this variance).
Since 
increasing $S$
decreases variance,
it is natural
that scaling
yields large speed-ups in this case.

In practice, 
the gradient's variance is neither zero nor infinite,
and both identity and linear scaling
may perform poorly.
Moreover, the gradient's variance
does not remain constant throughout training.
A practical algorithm, it seems, must continually adapt to the state of training.

\subsection{\algname{} definition} \label{sec:def}

\algname{}, defined in \algref{alg:gs}, adaptively interpolates between 
identity and linear scaling, based on the expectation of the gradient's variance.
During iteration $t$, \algname{} multiplies the learning
rate by a ``gain ratio'' $\rt \in [1, S]$:
\[ \etat = \rt \cdot \lrsched{\floor{\taut}} \, .  \]
Here we define
$\taut = \sum_{t' = 0}^{t - 1} \rtp$---the
\emph{scale-invariant iteration}.
The idea is that iteration $t$
performs the equivalent of $\rt$ single-batch iterations,
and $\tau_t$ accumulates this progress.
\algname{}
concludes when $\taut \geq \TB$,
where $\TB$
is the ``total scale-invariant iterations.''
Since $\rt \in [1, S]$, \algname{} requires
at least $\ceil{\TB / S}$ and
at most $\TB$ iterations.
For practical problem settings, this training duration 
often falls closer to $\ceil{\TB / S}$ than $\TB$ (as we see later in \secref{sec:empirical}).

The identity and linear rules correspond to two special cases of \algname{}.
If $\rt = 1$ for all $t$,
the algorithm equates to SGD with identity scaling.  
Similarly, if $\rt = S$ for all $t$,
we have linear scaling.
Thus, 
\secref{sec:interpolation} 
suggests setting
$\rt \approx 1$
when $\VBt$ is small
and
$\rt \approx S$
when this variance is large.
Introducing the notation
$\MBt = \norm{\nabla F(\wt)}^2$,
\algname{} achieves this by defining 
\begin{equation} \label{eqn:gain_ratio_def}
\rt = \frac{ \Eu{\wt}{ \VBt + \MBt  } }{ \Eu{\wt}{\tfrac{1}S \VBt + \MBt} }
 \, .
\end{equation}
Here there are expectations both with respect to the
batch distribution $\X$ (as part of the variance term, $\VBt$)  and with respect to
the distribution over training trajectories.
A practical implementation requires estimating $\rt$, and we describe a procedure
for doing so in \secref{sec:estimating_noise}.
Interestingly, since $\Eu{\wt}{\MBt}$ decreases toward zero with progress toward convergence,
we find empirically that $\rt$ often gradually increases during
training, resulting in a ``warm-up'' effect on the learning rate (see \secref{sec:empirical}).
In addition to this intuitive behavior, \eqnref{eqn:gain_ratio_def}
has a more principled justification.
In particular, 
this $\rt$
ensures that as $S$ increases, $\etat \Es{\MBt}$
and $\etat^2 \Es{\norms{\gbart}^2}$
increase multiplicatively by $\rt$.
This leads to the convergence bound for \algname{} that we next present.

\begin{table*}[!b]
\vsp
\vsp
\vspf
\caption{\textbf{Overview of training benchmarks.}} \label{tbl:tasks}
\vspf
\vspf
\small{
\begin{center}
\begin{tabular}{lllll} 
\toprule
Name & {Task} & {Model} & {Dataset} & {Metric} \\
\midrule
\cifar{} & Image classification & ResNet-18 (v2) & CIFAR-10 & Top-1 accuracy (\%) \\ 
\imagenet{} & Image classification & ResNet-50 (v1) & ImageNet & Top-1 accuracy (\%) \\ 
\deepspeech{} & Speech recognition & Deep speech 2 & LibriSpeech & Word accuracy (\%) \\
\transformer{} & Machine translation & Transformer base & WMT-2014 & BLEU\\  
\yolo{} & Object detection & YOLOv3 & PASCAL VOC & mAP (\%) \\
\bottomrule
\end{tabular}
\end{center}
}
\end{table*}

\subsection{Theoretical results} \label{sec:theory}

We now present convergence bounds
that describe
the speed-ups from \algname{}.
Even as the batch size increases,
\algname{}'s bound converges to the same objective value.
We also include an analogous bound for linear scaling.

Let us define
\mbox{$F^* = \min_{\w} F(\w)$}.
Our analysis requires a few assumptions
that are typical of SGD analysis of non-convex problems
(see, for example, \citep{Lei:2017,Yuan:2019}):
\begin{ass}[$\alpha$-Polyak-\L{}ojasiewicz] \label{ass:pl}
For some $\alpha > 0$, 
$F(\w) - F^*  \leq \tfrac{1}{2 \alpha} \norm{\nabla F(\w)}^2$
for all $\w$.
\end{ass}
\begin{ass}[$\beta$-smooth] \label{ass:smooth}
For some $\beta > 0$, we have
$\norms{\nabla F(\w) - \nabla F(\w')} \leq \beta \norms{\w - \w'}$ for all $\w$, $\w'$.
\end{ass}
\begin{ass}[Bounded variance] \label{ass:variance}
There exists a $V \geq 0$ such that 
$\VB \leq V $ for all $\w$.
\end{ass}

We emphasize that we do not assume
convexity. The PL condition,
which is perhaps the strongest of the assumptions,
is proven to hold for some nonlinear neural networks
\citep{Charles:2018}.

We consider constant $\lrschedname$ schedules,
which result in simple and instructive bounds.
Furthermore, constant learning rate schedules
provide reasonable results for many deep learning problems \citep{Sun:2019}.
To provide context for the \algname{} bound,
we first present a result for single-batch training:
\begin{restatable}[Single-batch SGD bound]{thm}{basebound} \label{thm:base_bound}
Given Assumptions~\ref{ass:pl}, \ref{ass:smooth}, \ref{ass:variance} and $\eta \in (0, 2 \beta^{-1})$,
consider \algref{alg:sgd}
with $S = 1$ and \emph{$\lrsched{t} = \eta$}. Defining
$\gamma = \eta \alpha (2 - \eta \beta)$
and $\DeltaB = \tfrac{1}{2 \gamma} \eta^2 \beta V$, we have
\vsp
\vsp
\vsp
\[ \E{F(\wT) - F^*} \leq (1 - \gamma)^T \left[F(\wz) - F^* \right] + \DeltaB \, .  \]
\end{restatable}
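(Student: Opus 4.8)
The plan is to establish a one-step contraction in expectation and then unroll it into a geometric recursion. Since $S = 1$, the averaged gradient $\gbart$ reduces to a single stochastic gradient $\gt$ with full variance $\VBt$. The starting point is the $\beta$-smoothness descent inequality (\assref{ass:smooth}),
\[ F(\wtp) \le F(\wt) + \ip{\nabla F(\wt), \wtp - \wt} + \tfrac{\beta}{2}\norm{\wtp - \wt}^2 . \]
Substituting the update $\wtp = \wt - \eta\gt$ and taking the expectation conditioned on $\wt$, I would use unbiasedness, $\E{\gt \mid \wt} = \nabla F(\wt)$, together with the variance decomposition $\E{\norm{\gt}^2 \mid \wt} = \MBt + \VBt$, to obtain
\[ \E{F(\wtp) \mid \wt} \le F(\wt) - \eta\bigl(1 - \tfrac{\eta\beta}{2}\bigr)\MBt + \tfrac{\eta^2\beta}{2}\VBt . \]
The coefficient $\eta(1 - \eta\beta/2)$ is strictly positive precisely because $\eta \in (0, 2\beta^{-1})$.

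Next I would invoke the PL inequality (\assref{ass:pl}) in the form $\MBt \ge 2\alpha\bigl(F(\wt) - F^*\bigr)$. Because the $\MBt$ term carries a negative coefficient, replacing $\MBt$ by this lower bound yields a valid upper bound, and the constant $\gamma = \eta\alpha(2 - \eta\beta) = 2\alpha\cdot\eta(1 - \eta\beta/2)$ emerges naturally. Bounding the variance by $\VBt \le V$ (\assref{ass:variance}), subtracting $F^*$, and taking total expectation via the tower rule gives the scalar recursion
\[ \E{F(\wtp) - F^*} \le (1 - \gamma)\,\E{F(\wt) - F^*} + \tfrac{1}{2}\eta^2\beta V . \]

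Finally I would unroll this recursion from $t = 0$ to $T$. Writing $a_t = \E{F(\wt) - F^*}$ and $c = \tfrac{1}{2}\eta^2\beta V$, iteration gives $a_T \le (1 - \gamma)^T a_0 + c\sum_{k=0}^{T-1}(1 - \gamma)^k$, and bounding the partial geometric sum by $\sum_{k=0}^{\infty}(1-\gamma)^k = \gamma^{-1}$ produces the additive term $\DeltaB = \tfrac{1}{2\gamma}\eta^2\beta V$, completing the bound. The one subtlety to verify carefully is that $\gamma \in (0, 1]$, so that $1 - \gamma \in [0, 1)$ and the geometric series is summable; this follows because PL combined with smoothness forces $\alpha \le \beta$ (from $2\alpha(F - F^*) \le \norm{\nabla F}^2 \le 2\beta(F - F^*)$), whence $\gamma = \eta\alpha(2 - \eta\beta) \le \alpha/\beta \le 1$ using that $\eta(2 - \eta\beta)$ is maximized at $\eta = \beta^{-1}$. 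I expect this positivity-and-summability bookkeeping for $\gamma$, rather than any single algebraic step, to be the part that most needs care.
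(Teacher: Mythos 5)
Your proof is correct and follows essentially the same route as the paper's: the paper establishes the identical one-step descent inequality (smoothness, unbiasedness, the decomposition $\E{\norms{\gt}^2 \mid \wt} = \MBt + \VBt$, then PL and the variance bound) inside a slightly more general lemma stated for arbitrary gain ratios $\rt$, and obtains the theorem by specializing to $\rt = 1$. The only cosmetic difference is the final bookkeeping: the paper closes an induction using the fixed-point identity $(1-\gamma)\DeltaB + \gamma\DeltaB = \DeltaB$, whereas you unroll and bound the partial geometric sum by $\gamma^{-1}$; your explicit verification that $\gamma \leq 1$ (via $\alpha \leq \beta$) is a detail the paper leaves implicit.
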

\vsp \vsp
\vsp \vsp
The bound describes two important characteristics of the single-batch algorithm. First, the suboptimality
converges in expectation
to at most $\DeltaB$.
Second, convergence to $\DeltaB + \epsilon$
requires at most $\ceil{\log((F(\w_0)-F^*) \epsilon^{-1}) / \log((1-\gamma)^{-1})}$ iterations.
We note similar bounds exist,
under a stronger variance assumption
\citep{Karimi:2016,Reddi:2016,De:2017,Yin:2018}.

Importantly, our \algname{}
bound converges to this same $\Delta$ for all practical values of $S$:

\begin{restatable}[\algname{} bound]{thm}{gsbound} \label{thm:gs_bound}
Define $\gamma$, $\DeltaB$ as in \thmref{thm:base_bound}. 
Given Assumptions~\ref{ass:pl}, \ref{ass:smooth}, \ref{ass:variance}, 
$S \leq \gamma^{-1}$,
and 
$\eta \in (0, 2 \beta^{-1})$,
define $T$ as the total iterations of \algref{alg:gs}
with \emph{$\lrsched{t} = \eta$} and scale $S$. 
Denoting $\bar{r} = \tfrac{1}T \sum_{t=0}^{T-1} \rt$, we have
\vsp
\vsp
\vsp
\[ 
\E{F(\wT) - F^*} \leq 
(1 - \gamma)^{\bar{r}T} \left[ F(\wz)- F^* \right] + \DeltaB \, .
\]
\end{restatable}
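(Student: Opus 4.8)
The plan is to mirror the single-batch argument behind \thmref{thm:base_bound}, but to carry the extra factor $\rt$ that \algname{} injects into both the learning rate and, through the gain definition, the variance bookkeeping. First I would apply $\beta$-smoothness (\assref{ass:smooth}) to the update $\wtp = \wt - \rt \eta \gbart$, obtaining
\[ F(\wtp) \leq F(\wt) - \rt\eta\langle\nabla F(\wt), \gbart\rangle + \tfrac{\beta}{2}\rt^2\eta^2\norms{\gbart}^2 . \]
Taking the conditional expectation over the $S$ batches drawn at iteration $t$ and using unbiasedness and variance reduction of the averaged gradient, $\E{\gbart \mid \wt} = \nabla F(\wt)$ and $\E{\norms{\gbart}^2 \mid \wt} = \MBt + \tfrac1S\VBt$, turns this into a descent inequality in the quantities $\MBt$ and $\tfrac1S\VBt$.

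The crux is the next step. After taking the full expectation over trajectories, the troublesome term is $\tfrac{\beta}{2}\rt^2\eta^2\,\Es{\MBt + \tfrac1S\VBt}$. Here I would invoke the defining identity for the gain, $\rt\,\Es{\tfrac1S\VBt + \MBt} = \Es{\VBt + \MBt}$, which is exactly \eqnref{eqn:gain_ratio_def} rearranged (treating $\rt$ as the deterministic ratio of trajectory expectations that the definition prescribes). This collapses one factor of $\rt$ and leaves $\tfrac{\beta}{2}\rt\eta^2\Es{\MBt + \VBt}$, so the whole line becomes precisely an $\rt$-scaled copy of the single-batch descent step. Bounding $\VBt \leq V$ (\assref{ass:variance}), grouping the $\Es{\MBt}$ terms, and applying the PL inequality $\MBt \geq 2\alpha(F(\wt)-F^*)$ (\assref{ass:pl}) then yields
\[ \Es{F(\wtp) - F^*} \leq (1 - \rt\gamma)\,\Es{F(\wt) - F^*} + \rt\gamma\DeltaB , \]
where the coefficient emerges as $\rt\cdot\eta\alpha(2-\eta\beta) = \rt\gamma$ and $\tfrac{\beta}{2}\eta^2 V = \gamma\DeltaB$.

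From here I would subtract $\DeltaB$ to get $\Es{F(\wtp)} - F^* - \DeltaB \leq (1-\rt\gamma)\big(\Es{F(\wt)} - F^* - \DeltaB\big)$ and unroll. The assumption $S \leq \gamma^{-1}$ guarantees $1 - \rt\gamma \geq 0$ (since $\rt \leq S$), so each factor is nonnegative and the chained product is valid, giving $\Es{F(\wT)}-F^*-\DeltaB \leq \big(\prod_{t=0}^{T-1}(1-\rt\gamma)\big)\big(F(\wz)-F^*-\DeltaB\big)$. The final ingredient is the elementary inequality $1 - r\gamma \leq (1-\gamma)^r$ for all $r \geq 1$: the map $r \mapsto (1-\gamma)^r$ is convex and coincides with the line $1 - r\gamma$ at $r=0$ and $r=1$, so the convex function dominates that chord for $r \geq 1$, and $\rt \in [1,S]$ supplies $\rt \geq 1$. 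Multiplying the per-iteration factors gives $\prod_t(1-\rt\gamma) \leq (1-\gamma)^{\sum_t \rt} = (1-\gamma)^{\bar{r} T}$, and discarding the nonpositive $-(1-\gamma)^{\bar{r} T}\DeltaB$ term (valid since $\DeltaB \geq 0$) recovers the stated bound.

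I expect the main obstacle to be the bookkeeping in the crux step rather than any hard estimate: one must recognize that the particular gain $\rt$ of \eqnref{eqn:gain_ratio_def} is designed precisely so that $\rt^2\Es{\MBt+\tfrac1S\VBt}$ reduces to $\rt\Es{\MBt+\VBt}$, converting the variance-reduced second moment back to its full single-batch value and producing a clean $\rt$-amplified descent step. Getting the conditioning right---that $\rt$ is the deterministic ratio of outer expectations and can be pulled outside---is the one place where care is needed; everything else parallels \thmref{thm:base_bound}.
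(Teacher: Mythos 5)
Your proposal is correct and follows essentially the same route as the paper's proof: the same smoothness-plus-conditional-expectation descent step, the same use of the gain identity $\rt\,\Es{\tfrac1S\VBt+\MBt}=\Es{\VBt+\MBt}$ to collapse one factor of $\rt$, the PL step yielding the contraction $(1-\rt\gamma)$, and the same elementary inequality $1-r\gamma\leq(1-\gamma)^{r}$ for $r\geq 1$. The only cosmetic difference is that you telescope after subtracting $\DeltaB$, whereas the paper packages the recursion as an inductive lemma (Lemma~\ref{lem:key_lemma}) with the $\DeltaB$ term carried additively.
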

\vsp \vsp
\vsp \vsp
This bound is remarkably similar to that of \thmref{thm:base_bound}.
Like single-batch SGD, the expected suboptimality
converges to at most $\DeltaB$, but
\algname{} achieves this for many batch sizes.
Also, \algname{}
reduces the total training iterations by a factor 
$\bar{r}$, the average gain.
That is, AdaScale requires at most
$\ceil{\bar{r}^{-1} \log((F(\w_0)-F^*) \epsilon^{-1}) / \log((1-\gamma)^{-1})}$
iterations
to converge to objective value $\DeltaB + \epsilon$.

Finally, we provide an analogous bound for linear scaling:
\begin{restatable}[Bound for linear scaling rule]{thm}{lsbound} \label{thm:ls_bound}
Define $\gamma$ and $\DeltaB$ as in \thmref{thm:base_bound}.
Given 
$\eta \in (0, 2 (S \beta)^{-1})$
and 
Assumptions~\ref{ass:pl}, \ref{ass:smooth}, \ref{ass:variance},  
consider \algref{alg:sgd}
with \emph{$\lrsched{t} = S \eta$}.
Defining $\xi(S) = (2 - \eta \beta)/(2 - S \eta \beta)$ and $F_0 = F(\wz)$, we have
\vsp
\vsp
\vsp
\[
\E{F(\wT) - F^*} \leq  
\left(1 - \tfrac{\gamma}{\xi(S)} \right)^{ST} \left[ F_0- F^* \right] + {\xi(S)} \cdot \DeltaB \, .
\]
\end{restatable}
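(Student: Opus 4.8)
The plan is to mirror the single-batch analysis of \thmref{thm:base_bound}, but track the two effects of linear scaling: the effective learning rate is $S\eta$ rather than $\eta$, while averaging $S$ independent batches shrinks the per-step gradient variance from $\VBt$ to $\tfrac{1}{S}\VBt$. Since $\lrsched{t} = S\eta$ with $\eta \in (0, 2(S\beta)^{-1})$ guarantees $S\eta \in (0, 2\beta^{-1})$, the descent argument remains valid with $S\eta$ playing the role of the learning rate, and the whole proof reduces to deriving a one-step contraction and unrolling it.

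First I would apply $\beta$-smoothness (\assref{ass:smooth}) to the update $\wtp = \wt - S\eta\,\gbart$ to obtain $F(\wtp) \le F(\wt) - S\eta\,\ip{\nabla F(\wt), \gbart} + \tfrac{1}{2}\beta S^2\eta^2\norms{\gbart}^2$. Taking the conditional expectation given $\wt$ and using $\Eu{\wt}{\gbart} = \nabla F(\wt)$ together with $\Eu{\wt}{\norms{\gbart}^2} = \MBt + \tfrac{1}{S}\VBt$ collapses the cross term and injects the reduced variance. Bounding $\VBt \le V$ (\assref{ass:variance}) and applying the $\alpha$-PL inequality (\assref{ass:pl}) in the form $\MBt \ge 2\alpha(F(\wt) - F^*)$ to the (negatively weighted) gradient-norm term then yields the recursion $\Eu{\wt}{F(\wtp) - F^*} \le (1 - \tilde\gamma)(F(\wt) - F^*) + \tfrac{1}{2}\beta S\eta^2 V$, where $\tilde\gamma = S\eta\alpha(2 - S\eta\beta)$.

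The remainder is bookkeeping against the constants of \thmref{thm:base_bound}. A short computation gives $\tilde\gamma = S\,\gamma/\xi(S)$ and $\tfrac{1}{2}\beta S\eta^2 V / \tilde\gamma = \xi(S)\,\DeltaB$, so taking total expectations and unrolling over $t = 0,\dots,T-1$ (summing the geometric series, bounded by $\tilde\gamma^{-1}$) produces $\E{F(\wT) - F^*} \le (1 - \tilde\gamma)^T[F_0 - F^*] + \xi(S)\,\DeltaB$. The final step converts the contraction base to the stated form: because $\tilde\gamma = S\gamma/\xi(S)$, Bernoulli's inequality gives $1 - S(\gamma/\xi(S)) \le (1 - \gamma/\xi(S))^S$, and raising both nonnegative sides to the power $T$ yields $(1 - \tilde\gamma)^T \le (1 - \gamma/\xi(S))^{ST}$, matching the claim.

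The main obstacle I anticipate is justifying that the contraction base is nonnegative, i.e. $\tilde\gamma \le 1$, which is exactly what legitimizes both the geometric-sum bound and the Bernoulli monotonicity step. I would discharge this using the standard smoothness fact that a global minimizer forces $\MBt \le 2\beta(F(\wt) - F^*)$; combined with \assref{ass:pl} this gives $\alpha \le \beta$, and writing $u = S\eta\beta \in (0,2)$ shows $\tilde\gamma = (\alpha/\beta)\,u(2-u) \le \alpha/\beta \le 1$. A secondary point worth stating cleanly is that the exponent $ST$ simply reflects the $S$ single-batch-equivalent updates performed per physical iteration under linear scaling, so the $(1 - \gamma/\xi(S))^{ST}$ factor is directly comparable to the $(1-\gamma)^T$ factor of \thmref{thm:base_bound}, with the degradation isolated entirely in the factor $\xi(S) \ge 1$ appearing in both the rate and the noise floor.
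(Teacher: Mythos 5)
Your proof is correct and follows essentially the same route as the paper's: the paper obtains your one-step contraction by reducing linearly-scaled SGD to single-batch SGD on the averaged loss $\tilde{f}(\w, \tilde{\x}) = \tfrac{1}{S}\sum_{i=1}^S f(\w, \tilde{\x}^{(i)})$ (learning rate $S\eta$, variance bound $V/S$) and invoking \thmref{thm:base_bound}, then performs the identical algebra giving rate $S\gamma/\xi(S)$ and noise floor $\xi(S)\DeltaB$, followed by the same Bernoulli step $1 - Sx \leq (1-x)^S$. Your explicit check that the contraction base is nonnegative (via $\alpha \leq \beta$) is a detail the paper leaves implicit, and is a welcome addition rather than a divergence.
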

\vsp \vsp
\vsp \vsp
Note $\xi(S) \geq 1$, and this function increases monotonically with $S$
(until an asymptote at $S = 2 (\eta \beta)^{-1}$).
Thus, unlike in \thmref{thm:gs_bound}, the bound converges
to a value that increases with $S$.
This means that compared to \algname{}, linear scaling often leads to worse
model quality
and greater risk of divergence, especially for large $S$.
We observe this behavior throughout our empirical comparisons in \secref{sec:empirical}.

Finally, we note 
\thmref{thm:gs_bound} 
requires $S \leq \gamma^{-1}$---for reasonably small $\eta$,
this is similar to the stricter constraint
$S \leq (2 \eta \alpha)^{-1}$.
Meanwhile,
\thmref{thm:ls_bound} requires that $S \leq 2 (\eta \beta)^{-1}$.
Compared to the constraint for \algname{},
this constraint
is typically much stricter, since $\alpha \leq \beta$ (and
$\alpha \ll \beta$ if the Hessian's eigenspectrum contains large outlier values,
which has been observed in practice for nerual networks \citep{Sagun:2017,Ghorbani:2019}).
It is also worth noting that if $\gamma$ is large enough that $S \geq \gamma^{-1}$,
then training at smaller scales converges quickly (due to \thmref{thm:base_bound}),
and large batch sizes are likely unnecessary.

\subsection{Practical considerations} \label{sec:estimating_noise}

\begin{figure}[t]

\begin{center}
\small{
\begin{tabular}{@{\hspace{0.02in}}p{0.1in}@{}c@{\hspace{0.03in}}c@{\hspace{0.03in}}c@{}}
& \hspace{0.0in} \imagenet, $S$$=$16 &
\imagenet, $S$$=$128 &
\cifar, $S$$=$8
 \\[-0.1em]
\rotatebox{90}{\hspace{0.17in} Gain ratio $\rt$}
& \includegraphics[width=0.97893in]{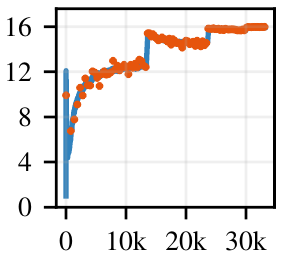} 
& \includegraphics[width=1.02727in]{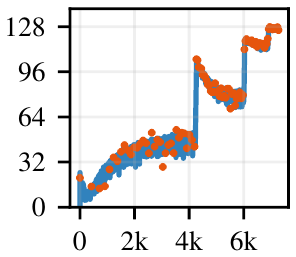} 
& \includegraphics[width=0.93059in]{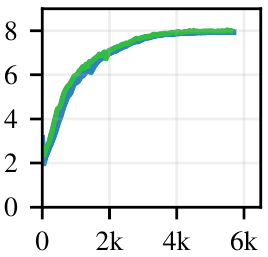}
\\[-0.2em]
& \hspace{0.15in} Iteration $t$
& \hspace{0.15in} Iteration $t$
& \hspace{0.15in} Iteration $t$
\\[-0.1em]
\multicolumn{4}{c}{
\includegraphics[width=1.71in]{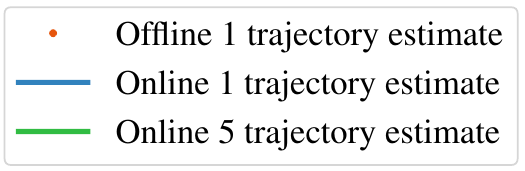} 
}
\end{tabular}
}
\end{center}
\vspace{-1.3em}
\caption{\textbf{Gain ratios}.
Plots compare various $\rt$ estimates.
In practice, \algname{} uses online estimates
from a single training trajectory.
We compare to offline estimates (using 1000 batches).
We also compare to
online estimates that 
use five independent trajectories (tied only
by their shared gain estimates)
to average gradient moment estimates
across trajectories
during each iteration.
The values align closely.
For \imagenet,
abrupt changes align with $\lrschedname$
step changes.
}
\label{fig:gain_ratios}
\vspf
\end{figure}

A practical \algname{} implementation 
must efficiently approximate the gain ratio $\rt$ for all iterations.
Fortunately,
the per-batch gradients
$\gt^{(1)}, \ldots, \gt^{(S)}$
and aggregated gradient $\gbart$
are readily available in distributed SGD algorithms.
This
makes approximating $\rt$ straightforward, since
in addition to \eqnref{eqn:gain_ratio_def}, we can express
$\rt$ as the ratio of expectations
\vspf{}
\[
\rt = \frac{
\Es{\tfrac{1}S \sum_{i=1}^S \norms{\gt^{(i)}}^2}
}{
\Es{\norms{\gbart}^2}
} \, .
\vsp{}
\vsp{}
\vsp{}
\]
Here we take the expectation over all randomness of the algorithm (both
current and prior batches).

To estimate the gain, we recommend tracking moving averages of
$\tfrac{1}S \sum_{i=1}^S \norms{\gt^{(i)}}^2$ and
${\norms{\gbart}^2}$ across iterations.
Denoting these averages by $m_1$ and $m_2$, respectively, 
we can estimate the gain as $\hat{r}_t = \tfrac{m_1 + \epsilon}{m_2 + \epsilon}$.
Here $\epsilon$ is a small constant, such as $10^{-6}$, for numerical stability.

For our empirical results, we use 
exponential moving averages with parameter 
$\theta = \max\{1 - S / 1000, 0\}$,
where $\theta = 0$
results in no averaging.
We find that \algname{} is robust to the choice of $\theta$, and
we provide evidence of this in \appref{app:theta}.
When ensuring numerical stability, the implementation for this work also uses an alternative to
our recommendation of 
adding $\epsilon$ to $m_1$ and $m_2$ (see \apprefs{app:empirical_details} for details).
Our recommended strategy simplifies gain estimation but should not significantly
affect results, since usually
 ${\norms{\gbart}^2} \gg \epsilon$ in practice.

\figref{fig:gain_ratios} verifies the usefulness of our estimator by comparing
\algname{}'s estimates to improved (but impractical) ones.
Moving averages (i)~add robustness to estimation variance
and (ii)~incorporate multiple points from the optimization space.
For (ii), we ideally would use points from
independent training trajectories, but this is impractical.
Even so, \figref{fig:gain_ratios} suggests that
estimates from single and multiple trajectories can align closely.
We note that numerous prior works---for example, \citep{Schaul:2013,Kingma:2015,McCandlish:2018}---have relied on similar moving averages to
estimate gradient moments.

One final practical consideration is
the momentum parameter $\rho$
when using \algname{}
with momentum-SGD.
The performance
of momentum-SGD depends less critically 
on $\rho$
than the learning rate \citep{Shallue:2019}.
For this reason,
\algname{} often performs well
if $\rho$ remains constant across scales and iterations.
This approach to momentum scaling has also succeeded
in prior works involving
the linear scaling rule \citep{Goyal:2017,Smith:2018}.

\begin{figure*}[t]
\begin{center}
\small{
\begin{tabular}{
    @{}
    >{\raggedleft\arraybackslash}p{0.21in}
    @{\hspace{0.01in}}
    c
    @{}
    >{\raggedleft\arraybackslash}p{0.26in}
    @{\hspace{0.01in}}
    c
    @{}
    >{\raggedleft\arraybackslash}p{0.26in}
    @{\hspace{0.01in}}
    c
    @{}
    >{\raggedleft\arraybackslash}p{0.26in}
    @{\hspace{0.01in}}
    c
    @{}
    }
\multicolumn{8}{c}{
    \imagenet{}
} \\[-0.25em]
\rotatebox{90}{\hspace{0.16in} Val. Acc (\%)} &
\includegraphics[width=1.25739in]{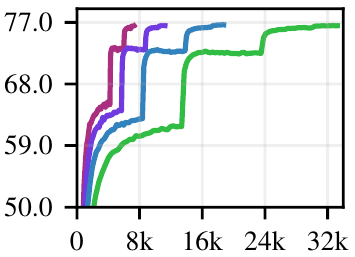} &
\rotatebox{90}{\hspace{0.16in} Val. Acc (\%)} &
\includegraphics[width=1.25739in]{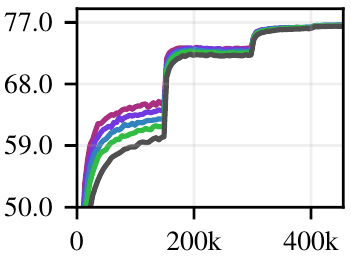} &
\rotatebox{90}{\hspace{0.16in} Training loss} &
\includegraphics[width=1.2076in]{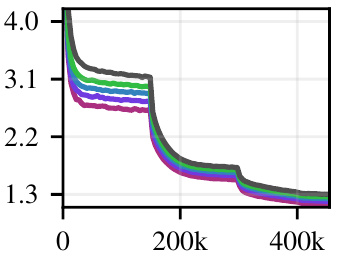} &
\rotatebox{90}{\hspace{0.10in} Learning rate $\etat$} &
\includegraphics[width=1.2076in]{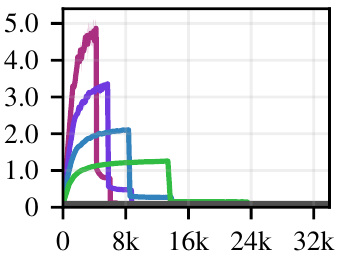} \\[-0.23em]
& {Iteration $t$} &
& {$S$-invariant iteration $\taut$} & 
& {$S$-invariant iteration $\taut$} & 
& {Iteration $t$} \\[-0.1em]
\multicolumn{8}{c}{
    \deepspeech{}
} \\[-0.25em]
\rotatebox{90}{\hspace{0.11in} Val. WAcc (\%)} &
\includegraphics[width=1.25739in]{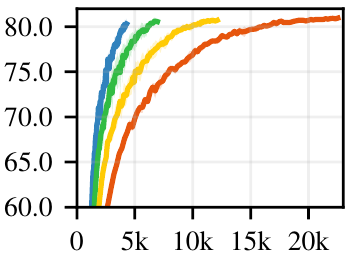} &
\rotatebox{90}{\hspace{0.11in} Val. WAcc (\%)} &
\includegraphics[width=1.25739in]{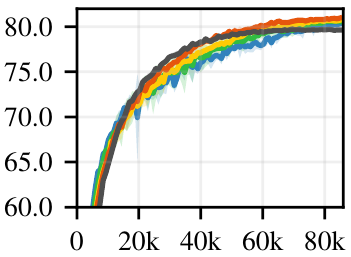} &
\rotatebox{90}{\hspace{0.16in} Training loss} &
\includegraphics[width=1.1827in]{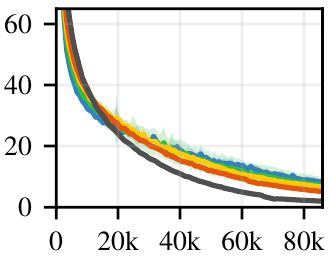} &
\rotatebox{90}{\hspace{0.10in} Learning rate $\etat$} &
\includegraphics[width=1.2076in]{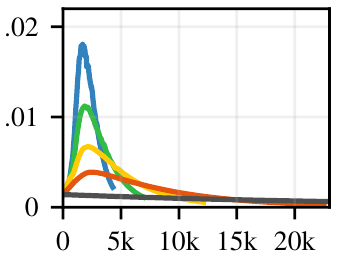} \\[-0.23em]
& {Iteration $t$} &
& {$S$-invariant iteration $\taut$} & 
& {$S$-invariant iteration $\taut$} & 
& {Iteration $t$} \\[-0.1em]
\multicolumn{8}{c}{
    \transformer{}
} \\[-0.45em]
\rotatebox{90}{\hspace{0.2in} Val. BLEU} &
\includegraphics[width=1.25739in]{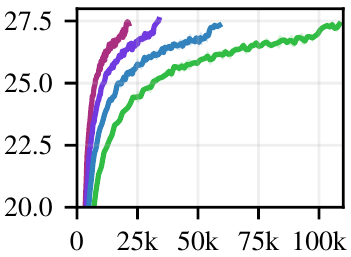} &
\rotatebox{90}{\hspace{0.2in} Val. BLEU} &
\includegraphics[width=1.25739in]{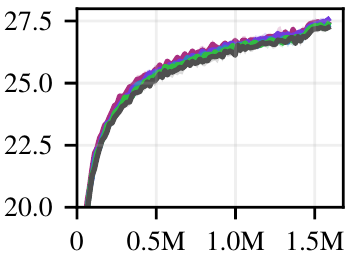} &
\rotatebox{90}{\hspace{0.16in} Training loss} &
\includegraphics[width=1.2076in]{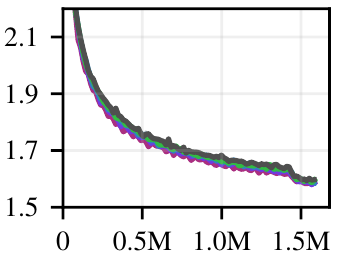} &
\rotatebox{90}{\hspace{0.10in} Learning rate $\etat$} &
\includegraphics[width=1.2076in]{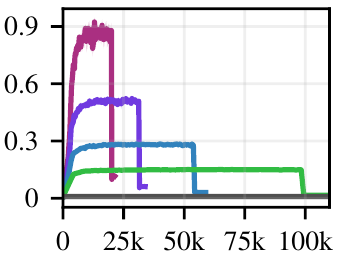} \\[-0.23em]
& {Iteration $t$} &
& {$S$-invariant iteration $\taut$} & 
& {$S$-invariant iteration $\taut$} & 
& {Iteration $t$} \\[-0.1em]
\multicolumn{8}{c}{
    \yolo{}
} \\[-0.25em]
\rotatebox{90}{\hspace{0.15in} Val. mAP (\%)} &
\includegraphics[width=1.25739in]{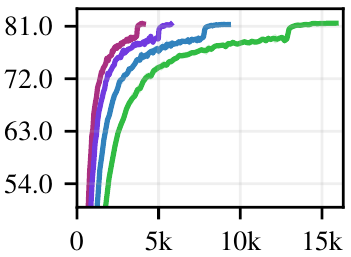} &
\rotatebox{90}{\hspace{0.15in} Val. mAP (\%)} &
\includegraphics[width=1.25739in]{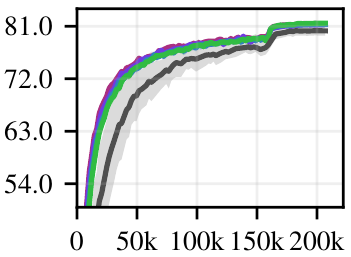} &
\rotatebox{90}{\hspace{0.16in} Training loss} &
\includegraphics[width=1.2076in]{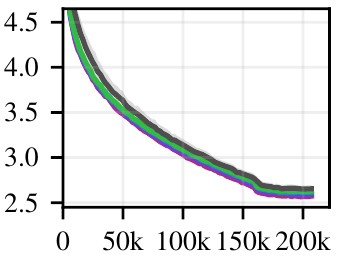} &
\rotatebox{90}{\hspace{0.10in} Learning rate $\etat$} &
\includegraphics[width=1.2076in]{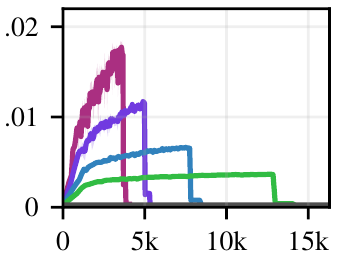} \\[-0.23em]
& {Iteration $t$} &
& {$S$-invariant iteration $\taut$} & 
& {$S$-invariant iteration $\taut$} & 
& {Iteration $t$} \\[-0.1em]
\\[-1.1em]
\multicolumn{8}{@{}c@{}}{
\includegraphics[width=5.5in]{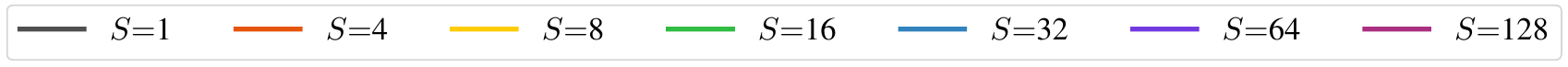} 
} \\[-1em]
\end{tabular}
}
\end{center}
\caption{\textbf{\algname{} training curves.}
For many scales and benchmarks,
\algname{}
trains quality models.
Training curves align closely in terms of $\taut$.
In all cases, $\etat$
warms up gradually at the start of training, even
though all $\lrschedname$ schedules are simple exponential or step decay functions (which are non-increasing in $t$).
}
\label{fig:invariance_curves}       
\vspf
\end{figure*}

\section{Empirical evaluation}
\label{sec:empirical}

We evaluate 
\algname{} on five practical training benchmarks.
We consider a variety
of tasks, models \citep{He:2016,He:2016b,Amodei:2016,Vaswani:2017,Redmon:2018}, and datasets \citep{Deng:2009,Krizhevsky:2009,Everingham:2010,Panayotov:2015}.
\tblref{tbl:tasks}
summarizes these training benchmarks.
We provide additional implementation details in
\apprefs{app:empirical_details}.

For each benchmark, we use
\emph{one simple learning rate
schedule}.
Specifically,
$\lrschedname$ is an
exponential decay function for
\cifar{} and \deepspeech{}, and a
step decay function otherwise.
We use standard $\lrschedname$ parameters for \imagenet{} and \yolo{}.
Otherwise,
we use tuned parameters
that approximately maximize the validation metric
(to our knowledge, there are no standard schedules
for solving 
\deepspeech{} and \transformer{}
    with momentum-SGD).
We use momentum $\rho = 0.9$ except for \transformer{},
in which case we use $\rho = 0.99$ for greater training stability.

\begin{figure*}[t]
\begin{center}
\small{
\begin{tabular}{
    @{}
    >{\raggedleft\arraybackslash}p{0.2in}
    @{\hspace{0.01in}}
    c
    @{}
    >{\raggedleft\arraybackslash}p{0.2in}
    @{\hspace{0.01in}}
    c
    @{}
    >{\raggedleft\arraybackslash}p{0.2in}
    @{\hspace{0.01in}}
    c
    @{}
    >{\raggedleft\arraybackslash}p{0.2in}
    @{\hspace{0.01in}}
    c
    @{}
    }
\rotatebox{90}{\hspace{0.28in} Scale $S$} &
\includegraphics[width=1.327in]{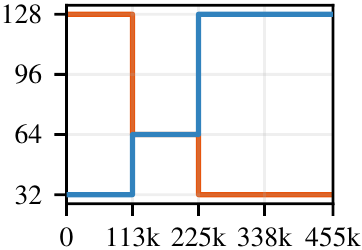} &
\rotatebox{90}{\hspace{0.16in} Val. Acc (\%)} &
\includegraphics[width=1.178in]{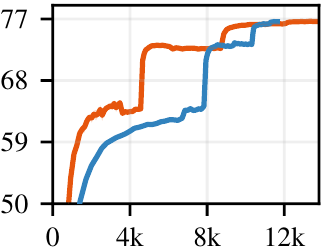} &
\rotatebox{90}{\hspace{0.16in} Val. Acc (\%)} &
\includegraphics[width=1.265in]{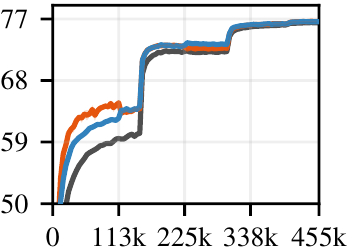} &
\rotatebox{90}{\hspace{0.16in} Training loss} &
\includegraphics[width=1.2898in]{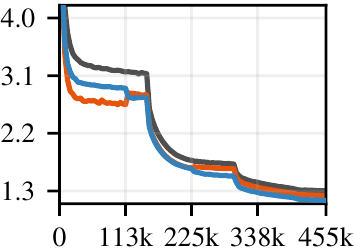} \\[-0.23em]
& {$S$-invariant iteration $\taut$} & 
& {Iteration $t$} &
& {$S$-invariant iteration $\taut$} & 
& {$S$-invariant iteration $\taut$}  \\[-0.1em]    
\multicolumn{8}{@{}c@{}}{
\includegraphics[width=3.1in]{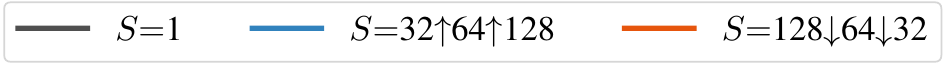} 
} \\[-1em]
\end{tabular}
}
\end{center}
\caption{\textbf{Elastic AdaScaling.}
For \imagenet{}, \algname{} scales training successfully
even with abrupt changes to $S$ (at $\taut= $ 133k, 225k).
Unlike \algname{},
LSW degrades model quality in this setting (see \tblref{tbl:results_summary}).
}
\label{fig:elastic_curves}       
\vspf
\end{figure*}

\begin{table*}[!b]
\vsp
\vsp
\vspf
\caption{\textbf{Comparison of final model quality.}
\emph{Shorthand:} AS=\algname{}, LSW=Linear scaling 
with warm-up, \LSS{}=Linear scaling with warm-up and additional steps,
\gr{gray}=model quality significantly worse than for $S = 1$ (5 trials, 0.95 significance), N/A=training diverges, Elastic$\uparrow$/$\downarrow$=elastic scaling with increasing/decreasing scale (see \figref{fig:elastic_curves}).
Linear scaling leads to poor model quality
as the scale increases;
\algname{} preserves model performance for nearly all cases.
}
\label{tbl:results_summary}
\vspf
\small{
\begin{center}
\begin{tabular}{
l
>{\raggedleft\arraybackslash}p{0.39in}
>{\raggedleft\arraybackslash}p{0.45in}
>{\raggedleft\arraybackslash}p{0.4in}
>{\raggedleft\arraybackslash}p{0.34in}
>{\raggedleft\arraybackslash}p{0.34in}
>{\raggedleft\arraybackslash}p{0.34in}
>{\raggedleft\arraybackslash}p{0.34in}
>{\raggedleft\arraybackslash}p{0.34in}
>{\raggedleft\arraybackslash}p{0.34in}
>{\raggedleft\arraybackslash}p{0.34in}
>{\raggedleft\arraybackslash}p{0.34in}
}
\toprule
\multirow{2}{*}{Task} &
\multirow{2}{*}{$S$} &
\multirow{2}{*}{\makecell{Total \\ batch size}}
& \multicolumn{3}{c}{ Validation metric}
& \multicolumn{3}{c}{Training loss} 
& \multicolumn{3}{c}{Total iterations}\\
\cmidrule(lr){4-6}
\cmidrule(lr){7-9}
\cmidrule(lr){10-12}
& & & 
AS & LSW & \LSS{} &
AS & LSW & \LSS{} &
AS & LSW & \LSS{}
\\
\midrule

\cifar{} & 1 & 128 & 94.1 & 94.1 & 94.1 & 0.157 & 0.157 & 0.157 & 39.1k & 39.1k & 39.1k\\
 & 8 & 1.02k & {94.1} & 94.0 & 94.0 & 0.153 & \gr{0.161} & 0.145 &  5.85k & {4.88k} & 5.85k\\
 & 16 & 2.05k & {94.1} & \gr{93.6} & 94.1 & 0.150 & \gr{0.163} & 0.136 & 3.36k & {2.44k}  & 3.36k\\
 & 32 & 4.10k & {94.1} & \gr{92.8} &  94.0 & 0.145 & \gr{0.177} & 0.128 &  2.08k & {1.22k} & 2.08k \\
 & 64 & 8.19k & \gr{93.9} & \gr{76.6} & \gr{93.0} & 0.140 & \gr{0.272} & 0.140 & 1.41k & {611} & 1.41k\\
\midrule

\imagenet{} & 1 & 256 & 76.4 & 76.4 & 76.4  & 1.30 & 1.30 & 1.30 & 451k & 451k & 451k\\
& 16 & 4.10k & 76.5 & \gr{76.3} & 76.5 & 1.26 & 1.31 & 1.27 & 33.2k & 28.2k & 33.2k \\
& 32 & 8.19k & 76.6 & \gr{76.1} & 76.4 & 1.23 & \gr{1.33} & 1.24 & 18.7k & 14.1k & 18.7k\\
& 64 & 16.4k & 76.5 & \gr{75.6} & 76.5 & 1.19 & \gr{1.35} & 1.20 & 11.2k & 7.04k & 11.2k\\
& 128 & 32.8k & 76.5 & \gr{73.3} & \gr{75.5}  & 1.14 & \gr{1.51} & 1.14 & 7.29k & 3.52k & 7.29k\\
& Elastic$\uparrow$ & various & 76.6 & \gr{75.7} & -- & 1.15 & \gr{1.37} & -- & 11.6k & 7.04k & -- \\
& Elastic$\downarrow$ & various & 76.6 & \gr{74.1} & -- & 1.23 & \gr{1.45} & -- & 13.6k & 9.68k & -- \\

\midrule

\deepspeech{} & 1 & 32 & 79.6 & 79.6 & 79.6 & 2.03 & 2.03 & 2.03 & 84.8k & 84.8k & 84.8k\\
& 4 & 128 & 81.0 & 80.9 & 81.0 & \gr{5.21} & \gr{4.66} & \gr{4.22} & 22.5k & 21.2k & 22.5k\\
& 8 & 256 &  80.7 & 80.2 & 80.7 & \gr{6.74} & \gr{6.81} & \gr{6.61} & 12.1k &  10.6k & 12.1k\\
& 16 & 512 & 80.6 & \gr{N/A} & \gr{N/A} & \gr{7.33} & \gr{N/A} & \gr{N/A} & 6.95k & 5.30k & 6.95k\\
& 32 & 1.02k & 80.3 & \gr{N/A} & \gr{N/A} & \gr{8.43} & \gr{N/A} & \gr{N/A} & 4.29k & 2.65k & 4.29k\\

\midrule

\transformer{} & 1 & 2.05k & 27.2 & 27.2 & 27.2 & 1.60 & 1.60 & 1.60 & 1.55M & 1.55M & 1.55M\\
 & 16 & 32.8k & 27.4 & 27.3 & 27.4 & 1.60 & 1.60 & 1.59 & 108k & 99.0k & 108k \\
 & 32 & 65.5k & 27.3 & \gr{27.0} & 27.3 & 1.59 & \gr{1.61} & 1.59 & 58.9k & 49.5k & 58.9k\\
 & 64 & 131k & 27.6 & \gr{26.7} & \gr{27.1} & 1.59 & \gr{1.63} & 1.60 & 33.9k & 24.8k & 33.9k\\
 & 128 & 262k & 27.4 & \gr{N/A} & \gr{N/A} & 1.59 & \gr{N/A} & \gr{N/A} & 21.4k & 12.1k & 21.4k\\

\midrule

\yolo & 1 & 16 & 80.2 & 80.2 & 80.2 & 2.65 & 2.65 & 2.65 & 207k & 207k & 207k\\
& 16 & 256 & 81.5 & 81.4 & 81.9 & 2.63 & 2.66 & 2.47 & 15.9k & 12.9k & 15.9k \\
& 32 & 512 & 81.3 & {80.5} & 81.7 & 2.61 & \gr{2.81} & 2.42 & 9.27k & 6.47k & 9.27k \\
& 64 & 1.02k & 81.3 & \gr{70.1} & 80.6 & 2.60 & \gr{4.02} &  2.51 & 5.75k & 3.23k  & 5.75k\\
& 128 & 2.05k & 81.4 & \gr{N/A} &  \gr{N/A} & 2.57 & \gr{N/A} & \gr{N/A} & 4.07k & 1.62k & 4.07k\\

\bottomrule

\end{tabular}
\end{center}
}
\end{table*}

\figref{fig:invariance_curves} (and \figref{fig:cifar10_intro}) contains \algname{} training curves
for the benchmarks and many scales.
Each curve plots the mean of five distributed training
runs with varying random seeds.
As $S$ increases, \algname{}
trains for fewer iterations and consistently preserves model quality.
Interestingly,
the training curves align closely
when plotted in terms of scale-invariant iterations.

For $S > 1$, \algname{}'s learning rate
increases gradually during initial training,
despite the fact that $\lrschedname$ is non-increasing.
Unlike warm-up, 
this behavior emerges naturally from a principled algorithm,
not hand-tuned user input.
Thus, \algname{} provides not only a compelling alternative to warm-up
but also a plausible explanation for warm-up's success.

For \imagenet{}, we also consider elastic scaling.
Here, the only change to \algname{} is that $S$
changes abruptly after some iterations.
We consider two cases: (i)~$S$ increases
from 32 to 64 at $\tau_t = \TB / 4$
and from 64 to 128 at $\tau_t = \TB / 2$,
and (ii)~the scale decreases at the same points,
from 128 to 64 to 32.
    In \figref{fig:elastic_curves},
we include training curves
from this setting.
Despite the abrupt batch size changes, \algname{} trains quality models,
highlighting
\algname{}'s value for the common
scenario of dynamic resource availability.

As a baseline for all benchmarks, we also evaluate linear scaling with warm-up (LSW).   
As inputs, LSW takes single-batch schedule $\Olrschedname = \lrschedname$
and duration $\OT = \TB$,
where $\lrschedname$ and $\TB$ are the inputs to \algname{}.
Our warm-up implementation closely follows that of \citet{Goyal:2017}.
LSW trains for $\ceil{\OT/S}$ iterations,
applying warm-up to the first
    5.5\% of iterations.
During warm-up, the learning rate increases
linearly
from $\Olrschedname(0)$ to $S \cdot \Olrschedname(0)$.

Since LSW trains for fewer iterations than \algname{}, we also consider a stronger
baseline, \LSS{}, which
matches \algname{} in total iterations.  \LSS{} uses the same learning
rate schedule as LSW except scaled (stretched) along the iterations axis by
the difference in training duration.
We note \LSS{} is \emph{significantly less practical} than LSW and \algname{}, since it requires
either (i)~first running \algname{} to determine the number of iterations, or (ii)~tuning
the number of iterations.

\tblref{tbl:results_summary} compares results
for \algname{}, LSW, and \LSS{}.
LSW consistently trains for fewer iterations,
but doing so comes at a cost.
As $S$ grows larger,
LSW consistently degrades model quality and
sometimes diverges.
For these divergent cases,
we also tested doubling the warm-up duration to 11\% of iterations,
and training still diverged.
Similarly, even with the benefit of additional iterations,
\LSS{} also produces worse model quality in many cases.
In contrast, \algname{}
preserves model quality for nearly all cases.

\begin{figure*}
\begin{center}
\small{
\begin{tabular}{
    @{}
    >{\raggedleft\arraybackslash}p{0.1in}
    @{}
    >{\raggedleft\arraybackslash}p{0.1in}
    @{\hspace{0.05in}}
    c
    @{\hspace{0.1in}}
    c
    @{\hspace{0.1in}}
    c
    @{\hspace{0.1in}}
    c
    @{\hspace{0.1in}}
    c
    @{}}
&& \hspace{0.08in} Single batch & \hspace{0.08in} \algname{} & \hspace{0.08in} LSW & \hspace{0.08in} Scaled SGD \\[-0.1em]
&& \hspace{0.08in} $S$$=$1, $T$$=$39.1k & \hspace{0.08in} $S$$=$16, $\TB$$=$39.1k & \hspace{0.08in} $S$$=$16, $\TSo$$=$39.1k & \hspace{0.08in} $S$$=$16, $T$$=$3.28k \\[-0.2em]
\rotatebox{90}{\hspace{0.25in} Total $\lrschedname$} &
\rotatebox{90}{\hspace{0.242in} decrease} &
\includegraphics[width=1.22486in]{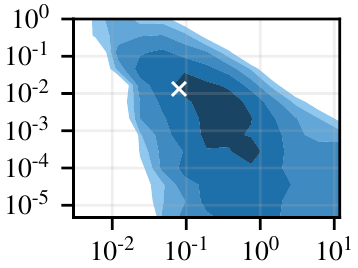} &
\includegraphics[width=1.22486in]{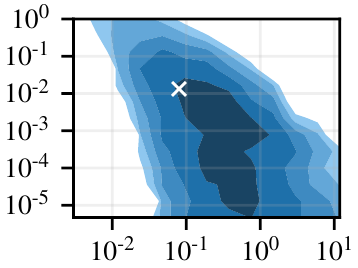} &
\includegraphics[width=1.22486in]{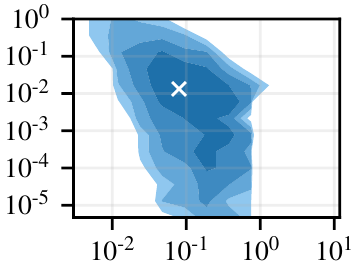} &
\includegraphics[width=1.189in]{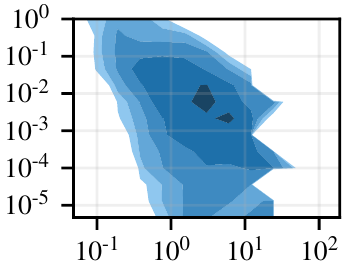} &
\raisebox{0.14in}{\includegraphics[width=0.4162in]{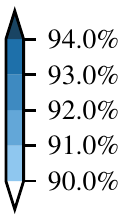}} \\[-0.1em]
&& \hspace{0.08in} Initial $\lrschedname$ 
& \hspace{0.08in} Initial $\lrschedname$ 
& \hspace{0.08in} Initial $\lrschedname$ 
& \hspace{0.08in} Initial $\lrschedname$ 
\\[-1em]
\end{tabular}
}
\end{center}
\caption{\textbf{\algname{} results for many learning rate schedules.}
Heat maps cover the space of exponential decay $\lrschedname$ schedules for \cifar{}.
At scale 16, validation accuracies for \algname{}
align closely with results for single-batch training,
with the space of 94+\% schedules growing moderately with \algname{}.
With LSW, no schedule achieves 94\% accuracy.
On the right, direct $\lrschedname$ search at scale 16 
produces inferior results to \algname{} (here the total iterations,
3.28k, is the average total iterations among 94+\% \algname{} trials).
Thus, \algname{} induces a superior family of
schedules for large-batch training.
The white `$\times$' indicates the $\lrschedname$ used for \figref{fig:cifar10_intro}.
}
\label{fig:many_scheds}       
\vspf
\end{figure*}

As a final comparison,
\figref{fig:many_scheds}
demonstrates
\algname{}'s
performance on \cifar{} with many different $\lrschedname$ schedules.
We consider
a 13$\times$13
grid of exponential decay schedules
and plot contours 
of the resulting validation accuracies.
At scale 16,
\algname{}  results
align with results for single-batch training,
illustrating that \algname{}
preserves model quality for many schedules.
Moreover, \algname{}
convincingly outperforms direct search over exponential
decay schedules for scaled SGD at $S$$=$16.
This suggests that 
\algname{}
provides a better learning rate family for distributed training.

\section{Relation to prior work}
\label{sec:related}

While linear scaling with warm-up is perhaps the most popular fixed scaling
rule, researchers have considered a few alternative strategies.
``Square root learning rate scaling''
\citep{Krizhevsky:2014,Li:2014,Hoffer:2017,You:2018} 
multiplies
learning
rates by the square root of the batch size increase.
Across scales, this preserves the covariance of the SGD update.
Establishing this invariant remains poorly justified, however,
and often root scaling degrades model quality in practice \citep{Goyal:2017,Golmalt:2018,Jastrzebski:2018}.
\algname{} adapts learning rates by making $\etat \E{\norms{\gbart}^2}$ invariant
across scales, which results in our bound from \secref{sec:theory}.
\citet{Shallue:2019} compute
near-optimal parameters for many tasks
and scales, and
the results do not
align with any fixed rule.
To ensure effective training,
the authors recommend avoiding such rules
and
re-tuning parameters for each new scale.
This solution is inconvenient and resource-intensive,
however, and \citeauthor{Shallue:2019}
do not consider adapting learning rates to the state of training.

Many prior works have also considered the role of gradient
variance in SGD.
\citet{Yin:2018} provide conditions---including sufficiently small
batch size and sufficiently large gradient variance---under which
linear learning rate scaling works well.
\citeauthor{Yin:2018} do not provide an alternative strategy
for adapting learning rates when linear scaling fails.
\citet{McCandlish:2018} 
study the impact of gradient variance
on scaling efficiency.  By averaging the relative
gradient variance over the course of training,
they make rough (yet fairly accurate) estimates
of training time complexities as a function of scale.
While \citet{McCandlish:2018} do not provide an algorithm for obtaining
such speed-ups, 
these general findings
also relate to \algname{},
since gradient variance similarly determines
\algname{}'s efficiency.
Much like \algname{},
\citet{Johnson:2018} also adapt learning
rates to lower amounts of gradient variance---in this case
when using SGD with importance sampling.
Because the variance reduction is relatively small in this setting, however,
distributed training can have far greater impact on
training times.
Lastly,
many algorithms
also adapt
to gradient moments
for improved training, given a single batch size---see \citep{Schaul:2013,Kingma:2015,Balles:2018}, just to name a few.
In contrast,
\algname{} translates learning rates for
one batch size into learning rates
for a larger scale.
Perhaps future versions of \algname{} will combine approaches and achieve both goals.
\citet{You:2017,You:2020} scaled training to large batch sizes
by combining adaptive
gradient algorithms with scaling rule heuristics.

\section{Discussion} 
\label{sec:discussion}

SGD is not perfectly parallelizable.
Unsurprisingly, the linear scaling rule can fail
at large scales.
In contrast, \algname{} accepts
sublinear speedups in order to better preserve model quality.
What do the speed-ups from \algname{} tell us about the 
scaling efficiency of SGD in general?
For many problems, such as \imagenet{} 
with batch size 32.8k,
\algname{} provides lower bounds on SGD's scaling efficiency.
An important remaining question is whether \algname{} is close to optimally efficient,
or if other practical algorithms can achieve similar model quality
with fewer iterations.

\algname{} establishes a useful new parameterization of learning rate schedules for large-batch
SGD.
Practitioners can provide a simple $\lrschedname$ schedule,
which \algname{} adapts to learning rates for scaled training.
From this,
warm-up behavior emerges naturally, which 
produces quality models for many problems and scales.
Even in elastic scaling settings, \algname{}
adapts successfully to the state of training.
Given these appealing qualities,
it seems important to further study 
this family of learning rate schedules.

Based on our empirical results,
as well as the algorithm's practicality
and theoretical justification,
we believe \algname{} can be
very valuable for distributed training.

\section*{Acknowledgements}

For valuable feedback,
we thank 
Emad Soroush,
David Dai,
Wei Fang,
Okan Akalin,
Russ Webb,
and Kunal Talwar.

\bibliography{references}
\bibliographystyle{icml2020}

\nocite{Kloeden:1992}
\nocite{Zhang:2018}
\nocite{Zhang:2019b}
\newpage
\appendix 
\onecolumn

\section{Proofs} \label{app:proofs}

In this appendix, we prove the 
results from \secref{sec:theory} and \secref{sec:gain_scaling}.
We first prove a lemma in \secref{app:key_lemma},
which we apply in the proofs.
We prove \thmref{thm:base_bound}
in \secref{app:proof_base_bound}, 
\thmref{thm:gs_bound} in \secref{app:proof_gs_bound}, 
and \thmref{thm:ls_bound} in \secref{app:proof_ls_bound}.
We also prove \propref{prop:identity} in \secref{app:identity_prop_proof}
and \propref{prop:linear} in \secref{app:linear_prop_proof}.

\subsection{Key lemma} \label{app:key_lemma}

\begin{lem} \label{lem:key_lemma}
Given Assumptions~\ref{ass:pl}, \ref{ass:smooth}, \ref{ass:variance} 
and \mbox{$\eta \in (0, 2 \beta^{-1})$},
define
$\gamma = \eta \alpha (2 - \eta \beta)$
and \mbox{$\DeltaB = \tfrac{1}{2 \gamma} \eta^2 \beta V$}.
Consider \algref{alg:gs} with $\lrsched{t} = \eta$.
For all iterations $t$, we have
\[
\E{F(\wt)-F^*} \leq 
\left[F(\wz)-F^*\right] 
\textstyle \prod_{t'=0}^{t-1}(1 - \rtp \gamma) 
+ \DeltaB \, .
\]
\end{lem}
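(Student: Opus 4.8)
The plan is to derive a one-step contraction for the expected suboptimality $\delta_t := \E{F(\wt) - F^*}$ and then unroll it by induction. First I would apply the $\beta$-smoothness descent inequality (\assref{ass:smooth}) to the \algname{} update $\wtp = \wt - \rt\eta\gbart$ (note $\etat = \rt\eta$ since $\lrsched{t}=\eta$ is constant), obtaining $F(\wtp) \le F(\wt) - \rt\eta\langle \nabla F(\wt), \gbart\rangle + \tfrac{\beta}{2}\rt^2\eta^2\norms{\gbart}^2$. Conditioning on $\wt$ and using unbiasedness $\E{\gbart \mid \wt} = \nabla F(\wt)$ together with the average-of-$S$-batches variance decomposition $\E{\norms{\gbart}^2 \mid \wt} = \MBt + \tfrac1S\VBt$, and then taking the outer expectation over the trajectory, I reach a bound expressed through $A := \E{\MBt}$ and $B := \E{\VBt}$.

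The crucial step is collapsing the quadratic $\rt^2$ factor down to a single power. Because $\rt$ is deterministic (it is defined through expectations in \eqnref{eqn:gain_ratio_def}), I can pull it outside the expectation and invoke the defining identity $\rt\,\E{\norms{\gbart}^2} = \rt\big(A + \tfrac1S B\big) = A + B$, which is exactly the numerator of \eqnref{eqn:gain_ratio_def}. Substituting this identity turns $\tfrac{\beta}{2}\rt^2\eta^2\E{\norms{\gbart}^2}$ into $\tfrac{\beta}{2}\rt\eta^2(A+B)$, so every surviving term carries a single factor $\rt$. Regrouping the $A$ terms via $\eta(2-\eta\beta) = \gamma/\alpha$ gives $\delta_{t+1} \le \delta_t - \tfrac{\rt\gamma}{2\alpha}A + \tfrac{\beta}{2}\rt\eta^2 B$.

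Next I would use the two remaining assumptions. The PL inequality (\assref{ass:pl}) gives $A = \E{\norm{\nabla F(\wt)}^2} \ge 2\alpha\,\delta_t$, converting the first-order term into the contraction $-\rt\gamma\,\delta_t$, and the variance bound (\assref{ass:variance}) gives $B \le V$, so $\tfrac{\beta}{2}\rt\eta^2 B \le \rt\gamma\DeltaB$ by the definition of $\DeltaB$. Together these yield $\delta_{t+1} \le (1-\rt\gamma)\delta_t + \rt\gamma\DeltaB$, i.e.\ $\delta_{t+1} - \DeltaB \le (1-\rt\gamma)(\delta_t - \DeltaB)$. Iterating from $t=0$ gives $\delta_t - \DeltaB \le (\delta_0 - \DeltaB)\prod_{t'=0}^{t-1}(1-\rtp\gamma)$, and since $-\DeltaB\prod_{t'=0}^{t-1}(1-\rtp\gamma) \le 0$, the stated bound $\delta_t \le [F(\wz)-F^*]\prod_{t'=0}^{t-1}(1-\rtp\gamma) + \DeltaB$ follows.

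The main obstacle I anticipate is handling the expectations carefully enough that pulling the deterministic $\rt$ outside and applying the gain-ratio identity is rigorous: the inner (conditional) expectation runs over the current $S$ batches, whereas the identity $\rt\,\E{\norms{\gbart}^2} = A + B$ concerns the trajectory-averaged second moments appearing in \eqnref{eqn:gain_ratio_def}, so the two must be combined through the tower property before the identity is applied. A secondary point is that the inductive unrolling needs each factor $1 - \rtp\gamma$ to be nonnegative; this is guaranteed because $\rt \le S$ and, in the regime where the result is applied ($S \le \gamma^{-1}$ as in \thmref{thm:gs_bound}, and $\rt = 1$ in the single-batch case), one has $\rt\gamma \le 1$, which preserves the direction of the inequality at each step.
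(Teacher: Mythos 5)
Your proposal is correct and follows essentially the same route as the paper's proof: the smoothness descent inequality, the conditional variance decomposition $\E{\norms{\gbart}^2 \mid \wt} = \MBt + \tfrac{1}{S}\VBt$, the gain-ratio identity to collapse $\rt^2$ to $\rt$, then the PL condition and the variance bound to obtain the one-step contraction $\delta_{t+1} \leq (1-\rt\gamma)\delta_t + \rt\gamma\DeltaB$, unrolled over iterations. Your explicit attention to the tower property and to the nonnegativity of each factor $1-\rtp\gamma$ (needed to preserve the inequality when unrolling) is if anything slightly more careful than the paper's write-up, which leaves the latter condition implicit until Theorem~\ref{thm:gs_bound}.
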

\begin{proof}
We prove this by induction. 
To simplify notation, let us define $\Ft(\w) = F(\w) - F^*$.
For $t = 0$, we have
\[ \Es{\Ft(\wz)} = \Ft(\wz) \leq \Ft(\wz)\textstyle \prod_{t'=0}^{-1}(1 - \rtp \gamma) +  \DeltaB \, . \]
Here we are using the convention $\prod_{i=0}^{-1} x_i = 1$.
For $t \geq 1$, assume the inductive hypothesis
\begin{equation} \label{eqn:base_induction}
\Es{\Ft(\wtm)} \leq 
\Ft(\wz) 
\prod_{t'=0}^{t-2}(1 - \rtp \gamma) 
+ \DeltaB \, .
\end{equation}

Applying \assref{ass:smooth} (smoothness) and the update equation $\wt = \wtm - \rtm \eta \gtm$, we have
\begin{align}
\Ft(\wt) & \leq \Ft(\wtm) + \ip{ \nabla F(\wtm), \wt - \wtm} + \tfrac{\beta}{2} \norm{ \wt - \wtm}^2 \nonumber \\
& = \Ft(\wtm) - \rtm \eta \ip{\nabla F(\wtm), \gtm} + \rtm^2 \etasq \tfrac{\beta}2 \norm{ \gtm }^2 \, . \nonumber
\end{align}

Taking the expectation with respect to the $S$ random batches from step $t$, we have
\[
\E{ \Ft(\wt) \mid \wtm} \leq \Ft(\wtm) - \rtm \eta \norm{\nabla F(\wtm)}^2 + \rtm^2 \eta^2 \tfrac{\beta}{2} \E{\norm{\gtm}^2 \mid \wtm} \, .
\]

Now taking the expectation with respect to the distribution of $\wtm$, it follows that
\begin{equation}
\E{ \Ft(\wt) } \leq \E{\Ft(\wtm)} - \rtm \eta \E{\norm{\nabla F(\wtm)}^2} + \rtm^2 \eta^2 \tfrac{\beta}{2} \E{\norm{\gtm}^2 } \, .
\label{eqn:bc1}
\end{equation}

For the last term, we have
\begin{align}
\E{\norm{\gtm}^2 } &= \E{\norm{(\gtm - \nabla F(\wtm)) + \nabla F(\wtm)}^2 } \nonumber \\
&= \E{\norm{\gtm - \nabla F(\wtm)}^2 + \norm{\nabla F(\wtm)}^2} \nonumber \\
&= \E{\tfrac{1}{S} \VBtm + \norm{\nabla F(\wtm)}^2} \nonumber \\
&= \frac{1}{\rtm} \E{ \VBtm + \norm{\nabla F(\wtm)}^2} \nonumber \\
&\leq \frac{1}{\rtm} \left( \E{\norm{\nabla F(\wtm)}^2} + V \right) \, . \label{eqn:bc2}
\end{align}
Combining \eqnref{eqn:bc2} with \eqnref{eqn:bc1}, we have
\begin{align}
\E{ \Ft(\wt) } & \leq \E{\Ft(\wtm)} - \rtm \eta (1 - \eta \tfrac{\beta}{2}) 
\E{\norm{\nabla F(\wtm)}^2} + \rtm \eta^2 \tfrac{\beta}2 V \nonumber \\
& \leq (1 - \rtm \gamma) \E{\Ft(\wtm)} + \rtm \gamma \Delta \, .
\end{align}
In this last step, we applied \assref{ass:pl} (PL condition) and plugged in
definitions for $\gamma$ and $\Delta$.

To complete the proof, we apply \eqnref{eqn:base_induction}:
\begin{align*}
\Es{\Ft(\wt)} & \leq (1 - \rtm \gamma) \left(\Ft(\wz)\prod_{t'=0}^{t-2}(1 - \rtp \gamma) + \DeltaB \right) + \rtm \gamma \Delta \\
&= \Ft(\wz)\prod_{t'=0}^{t-1}(1 - \rtp \gamma) + \DeltaB \, .
\end{align*}
\end{proof}

\subsection{Proof of \thmref{thm:base_bound}} \label{app:proof_base_bound}

\basebound*

\begin{proof}
The theorem is a special case of \lemref{lem:key_lemma}.
In particular,
\algref{alg:sgd} with inputs $\lrsched{t} = \eta$, $S = 1$, and $T$ iterations
is equivalent to \algref{alg:gs} with $\TB = T$ and the same scale 
and learning rate inputs.
This follows from the fact that $\rt = 1$ for all
iterations of \algname{} when $S = 1$.
Thus, we can obtain the result by plugging 
$t = T$ into the bound from \lemref{lem:key_lemma}.
\end{proof}

\subsection{Proof of \thmref{thm:gs_bound}} \label{app:proof_gs_bound}

\gsbound*

\begin{proof}
Let $T$ denote the total iterations for \algref{alg:gs}.
Applying \lemref{lem:key_lemma}, we have
\begin{equation}
\E{F(\wt)-F^*} \leq (F(\wz)-F^*)\prod_{t'=0}^{T-1}(1 - \rtp \gamma)  + \DeltaB  \, .
\label{eqn:inq1}
\end{equation}
Now note that for any $r \geq1$ and $ x \in [0, 1]$, we have 
\begin{equation}
1-rx \leq (1-x)^r. 
\label{eqn:a8sjd}
\end{equation}

This holds because 
for any $r \geq1$ and $ x \in [0, 1]$,
the function $(1-x)^r$ is convex in $x$,
    and $1-rx$ is tangent to this function at $x=0$. 
Thus,
\begin{equation}
\prod_{t'=0}^{T-1}(1 - \rtp \gamma) \leq (1-\gamma)^{\sum_{t'=0}^{T-1}r_{t'}} \, .
\label{eqn:inq3}
\end{equation}
Note that this requires $1 - \rt \gamma \geq 0$ for all $t$, which is true because
$\rt \leq S \leq \gamma^{-1}$.
Now plugging \eqnref{eqn:inq3} into \eqnref{eqn:inq1},
\begin{align*}
\E{F(\wt)-F^*} &\leq (F(\wz)-F^*)(1-\gamma)^{\sum_{t'=0}^{T-1}r_{t'}} + \DeltaB \\
&= (F(\wz)-F^*)(1-\gamma)^{\bar{r}T} + \DeltaB \, .
\end{align*}
\end{proof}

\subsection{Proof of \thmref{thm:ls_bound}} \label{app:proof_ls_bound}

\lsbound*

\begin{proof}
We reduce the theorem to a special
case of \thmref{thm:base_bound}.
Define $\xtil = (\xtil^{(1)}, \ldots, \xtil^{(S)})$,
where $\xtil^{(i)} \sim \X$ for each $i \in [S]$, and $\xtil^{(1)}, \ldots, \xtil^{(S)}$ are jointly independent.
Denote by $\tilde{\X}$ the distribution of $\xtil$.
Also define
\[
\ftil(\w, \xtil) = \frac{1}S \sum_{i=1}^S f(\w, \xtil^{(i)}) \, .
\]
It follows that for any $\w$,
\[
\Eu{\xtil}{\norms{\nabla \ftil(\w, \xtil) - \nabla F(\w)}^2} = \frac{1}S \VB \leq \frac{V}S \, .
\]
The algorithm described in \thmref{thm:ls_bound}
is identical to running \algref{alg:sgd} with
scale $1$, batch distribution $\tilde{\X}$, loss $\ftil$, learning rate $\lrsched{t} = S \eta$, and variance upper bound $\frac{V}S$.
Plugging these values into \thmref{thm:base_bound}, we have
\begin{align*}
\E{F(\wT)-F^*} & \leq 
(1 - S \eta \alpha (2 - S \eta \beta))^T [F(\wz)-F^*] + \frac{S \eta \beta V S^{-1}}{2 \alpha \left( 2 - S \eta \beta \right)}  \\
&= 
\left(1 - S \gamma \cdot \left( \tfrac{2 - S \eta \beta}{2 - \eta \beta} \right)\right)^T
[F(\wz)-F^*] 
+ 
\left( \tfrac{2 - \eta \beta}{2 - S \eta \beta} \right) \DeltaB 
\\
&\leq 
\left(1 - \gamma \cdot \left( \tfrac{2 - S \eta \beta}{2 - \eta \beta} \right)\right)^{ST}
[F(\wz) -F^*] 
+
\left( \tfrac{2 - \eta \beta}{2 - S \eta \beta} \right) \DeltaB \, .
\end{align*}
The last step follows from \eqnref{eqn:a8sjd}.
\end{proof}

\subsection{Proof of \propref{prop:identity}} \label{app:identity_prop_proof}

\identityprop*

\begin{proof}
Since the gradient variance is zero,
the $\mathtt{compute\_gradient}$ function returns
$\nabla F(\wt)$, which does not depend on $S$.
Thus, $\w^{(S)} = \w^{(1)}$ and $F(\w^{(S)}) = F(\w^{(1)})$.
\end{proof}

\subsection{Proof of \propref{prop:linear}} \label{app:linear_prop_proof}

\linearprop*

\begin{proof}
The scaled SGD algorithm runs for ${\nu T}/S$ iterations
and
follows the update rule
\[
\wtp = \wt - \tfrac{S {\eta}}{\nu} \nabla F(\wt) + \tfrac{S {\eta}}\nu \xibt \, .
\]
Here $\xibt$ is normally distributed with $\E{\xibt} = \m{0}$
and $\mathrm{cov}(\xibt, \xibt) = \tfrac{\nu}{S} \bm{\Sigma}$.
In the limit $\nu \rightarrow +\infty$, this difference equation
converges to a stochastic differential equation 
on the interval $[0, {\eta} {T}]$ \citep[Chapter~9]{Kloeden:1992}:
\[
d \w = -\nabla F(\w) dt + ({\eta} \bm{\Sigma})^{1/2} d \m{W}(t) \,, \quad \text{where} \quad \m{W}(t) \sim \mathcal{N}(\m{0}, \m{I}) \, .
\]
Since this SDE does not depend on $S$, the distributions of
$\w^{(S)}$ and $\w^{(1)}$ are identical 
in this limit.
Thus, we have $\Es{F(\w^{(S)})} = \Es{F(\w^{(1)})}$.
\end{proof}

\section{Additional details on empirical comparisons}
\label{app:empirical_details}

This appendix provides additional details of our experiment set-up. 

\subsection{Learning rate schedules}

We describe the $\lrschedname$ schedules for each training
benchmark in \tblref{tbl:learning_rates}.
We use two learning rate families: exponential decay and
step decay.
Using parameters $\eta_0$, $d$, and $w_i$,
we define
\mbox{$\lrsched{t} = \eta_0 d^{(t/\TSo)}$} for exponential decay families and
$\lrsched{t} = \eta_0 d^{\sum_i^n\mathbbm{1}[{t > w_i}]}$ for step decay families.
Here $\TSo$ denotes the total iterations for scale $S = 1$.
Note that in all cases, we use simple schedules and no warm-up.

\begin{table}[h]
\caption{\textbf{Learning rate schedules for training benchmarks.}
}
\label{tbl:learning_rates}
\small{
\begin{center}
\begin{tabular}{lllll}
\toprule
Benchmark & Learning rate famliy & 
$\eta_0$ & $d$ & $w_i$ \\
\midrule
\cifar{} & Exponential decay & 0.08 & 0.0133 & N/A  \\
\imagenet{} & Step decay & 0.1 & 0.1 & 150,240, 300,480, 400,640  \\
\deepspeech{} & Exponential decay & 1.4 $\times 10^{-3}$ & 0.05 & N/A \\
\transformer{} & Step decay & 0.01 & 0.1 & 1,440,000 \\
\yolo{} & Step decay & 2.5 $\times 10^{-4}$ & 0.1 & 160,000, 180,000 \\
\bottomrule
\end{tabular}
\end{center}
}
\end{table}

For \imagenet{} and \yolo{},
we used standard learning rate schedules from \citep{Goyal:2017}
and \citep{Zhang:2019b}.
For \cifar{}, \deepspeech{}, and \transformer{},
we chose learning rate parameters, via hand-tuning, that approximately
maximized model quality.
This was necessary for \deepspeech{}
and \transformer{}, since our
reference implementations train with the
Adam optimizer \citep{Kingma:2015}, and
momentum-SGD requires different learning rate values.

\subsection{Warm-up implementation}

Our
warm-up procedure closely follows the strategy of \citet{Goyal:2017}.
We apply warm-up
for the first 5.5\% of training iterations---we
    denote this number by $W_S$.
During warm-up, the learning rate increases
linearly, starting at the 
initial learning rate for single-batch training and finishing at $S$
times this value.
After warm-up, we apply linear scaling to the single-batch schedule.
Following \citet{Goyal:2017},
we modify this scaled schedule so that
the total iterations, including warm-up, is proportional to $S^{-1}$.
For step-decay schedules,
we omit the first $W_S$ iterations after warm-up.
For exponential decay schedules, we compress the scaled schedule by $W_S$ iterations,
using slightly faster decay.

\subsection{Numerical stability strategy for AdaScale implementation} \label{app:numerical_stability}

As discussed in \secref{sec:estimating_noise},  
the AdaScale implementation for our empirical results uses an alternative strategy
for ensuring numerical stability instead of the recommendation from \secref{sec:estimating_noise}.
The recommended strategy is simpler, but the results should not differ significantly.

For the experiments, we estimate mean and variance quantities
separately.  
Algebraically, this gain estimator 
is equivalent to that of \secref{sec:estimating_noise}, except
for the numerical stability measures.
In particular, we define
\[ 
\begin{array}{c}
\\[-1.35em]
\textstyle \hat{\sigma}^2_t = \tfrac{1}{S - 1} \sum_{i = 1}^S \norms{\gt^{(i)}}^2
- \tfrac{S}{S - 1} \norm{\gbart}^2 \, , \\ \\[-0.9em]
\text{and}\quad \hat{\mu}^2_t = \norm{\gbart}^2 - \tfrac{1}S \hat{\sigma}^2_t \, .
\\
\\[-1.4em]
\end{array}
\]
Here $\hat{\sigma}^2_t$ and $\hat{\mu}^2_t$
are unbiased estimates of $\Eus{\wt}{\VBt}$ and $\Eus{\wt}{\MBt}$.
We estimate $\rt$ by plugging in moving averages
$\bar{\sigma}^2_t$
and $\bar{\mu}_t^2$, which average
$\hat{\sigma}^2_{t}$ and $\hat{\mu}^2_{t}$ over prior iterations.
The implementation uses exponential moving average parameter $\theta = \max\{1 - S / 1000, 0\}$, where $\theta = 0$
results in no averaging.
Before averaging, we set
$\hat{\sigma}^2_t \gets \mathrm{max}(\hat{\sigma}^2_t, \epsilon)$
(to prevent division by zero)
and $\hat{\mu}^2_t \gets \mathrm{max}(\hat{\mu}^2_t, 0)$ (to ensure $\rt \in [1, S]$).
To initialize,
we set $r_{0} \gets 1$,
and for iterations $t < (1 - \theta)^{-1}$,
we define
$\bar{\sigma}^2_t$ and $\bar{\mu}_t^2$
as the mean 
of past samples.
The averaging procedure does not factor $\hat{\sigma}^2_t$ and $\hat{\mu}^2_t$
into the estimate for $r_t$ (only mean and variance estimates from prior iterations),
but we recommend using estimates from the current iteration if possible.

\subsection{Benchmark-specific implementation details}

Here we describe implementation details
that are specific to each benchmark task.

\subsubsection{\cifar{}}

We train ResNet-18 (preactivation) models \citep{He:2016b},
using the standard training data split for CIFAR-10 \citep{Krizhevsky:2009}.
We use weight decay $= 5\times 10^{-4}$.
For batch normalization, we use parameters momentum $= 0.995$ and $\epsilon = 2 \times 10^{-5}$, and we do not train the batch normalization scaling parameters.
We apply standard data augmentation during training.
Specifically, we pad images to $40 \times 40$ and random crop to $32 \times 32$, and we also apply random horizontal reflections.

\subsubsection{\imagenet{}}
For ImageNet classification \citep{Deng:2009}, we train ResNet-50 models \citep{He:2016}. 
Our implementation closely follows the implementation of \citet{Goyal:2017}.  
We use stride-2 convolutions on $3 \times 3$~layers. 
For each block's final batch normalization layer, we initialize the batch norm scaling parameters to $0$ (and we initialize to $1$ everywhere else).
We use weight decay parameter $10^{-4}$. Since each GPU processes 128 examples per batch,
we use ghost batch normalization \citep{Hoffer:2017} with ghost batch size $32$.
We resize input images to $224 \times 224 \times 3$. For data augmentation, we apply random cropping and left-right mirroring during training.

\subsubsection{\deepspeech{}}
We use \citet{Amodei:2016}'s Deep Speech 2 model architecture. The model consists of two 2D convolutional input layers, five bidirectional RNN layers, one fully connected layer, and softmax outputs. Each convolutional layer has $32$ filters. The RNN layers use GRU cells with hidden size $800$.
We apply batch normalization to the inputs of each layer. 
The batch norm parameters are momentum $=0.997$ and $\epsilon=10^{-5}$. The loss is CTC loss.
We apply gradient clipping with threshold $100$.
The inputs to the network are log spectrograms, which we compute using $20$ms windows from audio waveforms sampled at $16$ kHz. The training data is the \texttt{train-clean-100} and \texttt{train-clean-360} partitions of the OpenSLR LibriSpeech Corpus, which amounts to 460 hours of recorded speech. We evaluate models on the \texttt{dev-clean} partition and use the CTC greedy decoder for decoding. 

\subsubsection{\transformer{}}
We train Transformer base models \citep{Vaswani:2017}. We use dynamic batching with at most $256$ tokens per example. 
In \tblref{tbl:results_summary}, the ``batch size'' is the maximum number of tokens processed per iteration. 
Our implementation closely follows that of \citet{Vaswani:2017}.  Unlike \citeauthor{Vaswani:2017}, we use only the final model for evaluation instead of the average of the last five checkpoints. 
We train on the WMT 2014 English-German dataset and evaluate on the \texttt{newstest2014} test set. We compute BLEU scores using the script in the official TensorFlow GitHub repository, \url{https://github.com/tensorflow/models/blob/master/official/transformer/compute_bleu.py}
\subsubsection{\yolo{}}
We train YOLOv3 models \citep{Redmon:2018}. To achieve
high mAP scores, we also apply mixup \citep{Zhang:2018} and class label smoothing, following \citep{Zhang:2019b}. We also use focal loss. We use batch normalization momentum$=0.9$ and weight decay $=5 \times 10^{-4}$. We resize input images to $416 \times 416$ (for both training and validation). 
We report mAP values at IOU threshold $0.5$. We use the Pascal VOC 
2007 \texttt{trainval} and 2012 \texttt{trainval} datasets for training and the 2007 test set for validation \citep{Everingham:2010}. During training, we initialize the darknet-53 convolutional layers with weights trained on ImageNet.

\subsection{Miscellaneous}

In practice,
wall time speed-ups also depend on
system scaling efficiency.
Since most aspects of system scaling
relate orthogonally to the training algorithm,
we limit our scope to algorithmic aspects of training.

For
\figref{fig:many_scheds},
one dimension defines initial value 
$\lrsched{0}$,
and the second dimension specifies total
decrease $\lrsched{\TB}/\lrsched{0}$.
For single-batch training, we use $T = 39.1 \times 10^3$ steps.
We run \algname{}
and the LW baseline at $S = 16$,
and we compare the  final validation accuracies.

\section{Robustness to averaging parameter \label{app:theta}}

In this appendix, we test the robustness of \algname{}
to the averaging parameter $\theta$ for estimating gain ratios (see \secref{sec:estimating_noise}).
When $\theta = 0$,
\algname{} does not average estimates of gradient moments.
The closer $\theta$ is to $1$, the more that \algname{} averages
across iterations.

Using the \cifar{} benchmark, we compare four values of $\theta$
at scales $S = 8$ and $S = 32$.
The case $\theta = 1 - S / 1000$ corresponds
to the \cifar{} experiment for \figref{fig:cifar10_intro}.
We average the resulting metrics over five trials.
\figref{fig:theta} contains the training curves.

\begin{figure}[h]
\begin{center}
\small{
\begin{tabular}{
    @{}
    >{\raggedleft\arraybackslash}p{0.11in}
    @{\hspace{0.01in}}
    c
    @{}
    >{\raggedleft\arraybackslash}p{0.14in}
    @{\hspace{0.01in}}
    c
    @{}
    >{\raggedleft\arraybackslash}p{0.14in}
    @{\hspace{0.01in}}
    c
    @{}
    >{\raggedleft\arraybackslash}p{0.14in}
    @{\hspace{0.01in}}
    c
    @{}
    }
\multicolumn{8}{c}{
 $S = 8$
} \\
\rotatebox{90}{\hspace{0.17in} Val. Acc (\%)} &
\includegraphics[width=1.2767in]{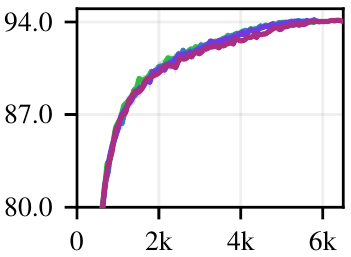} &
\rotatebox{90}{\hspace{0.11in} Train objective} &
\includegraphics[width=1.22617949in]{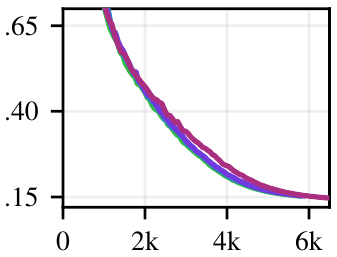} &
\rotatebox{90}{\hspace{0.31in} Gain $\rt$} &
\includegraphics[width=1.1503in]{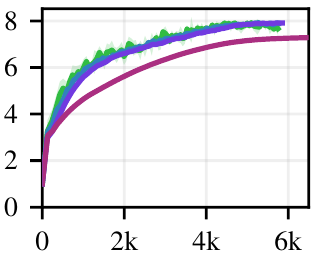} &
\rotatebox{90}{\hspace{0.10in} Learning rate $\etat$} &
\includegraphics[width=1.22617949in]{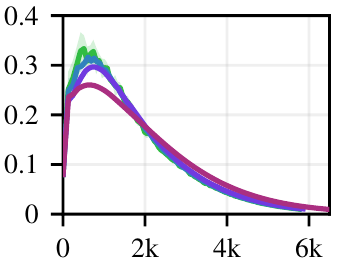} \\
& {Iteration $t$} &
& {Iteration $t$} &
& {Iteration $t$} &
& {Iteration $t$} \\[-0.1em]
\\[-1em]
\multicolumn{8}{@{}c@{}}{
\includegraphics[width=5.0in]{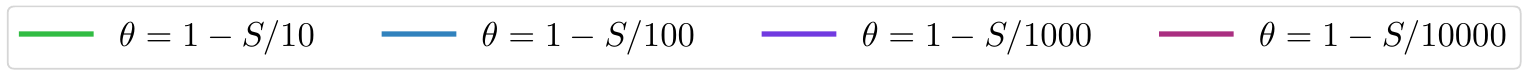} 
} \\
\multicolumn{8}{c}{
 $S = 32$
} \\
\rotatebox{90}{\hspace{0.17in} Val. Acc (\%)} &
\includegraphics[width=1.2767in]{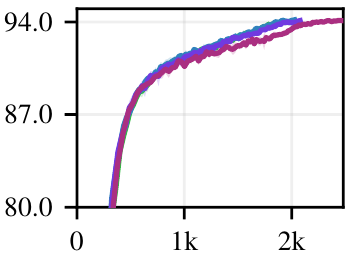} &
\rotatebox{90}{\hspace{0.11in} Train objective} &
\includegraphics[width=1.22617949in]{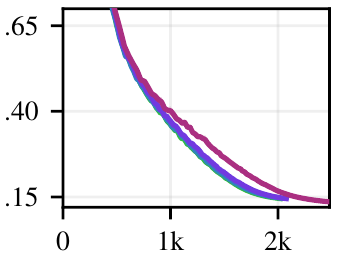} &
\rotatebox{90}{\hspace{0.31in} Gain $\rt$} &
\includegraphics[width=1.2in]{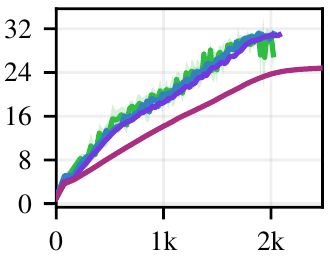} &
\rotatebox{90}{\hspace{0.10in} Learning rate $\etat$} &
\includegraphics[width=1.22617949in]{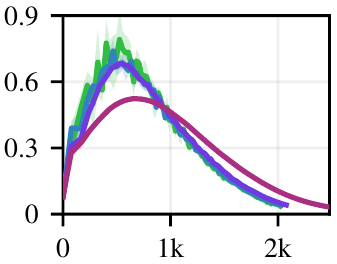} \\
& {Iteration $t$} &
& {Iteration $t$} &
& {Iteration $t$} &
& {Iteration $t$} \\[-0.1em]
\\[-1em]
\multicolumn{8}{@{}c@{}}{
\includegraphics[width=4.76in]{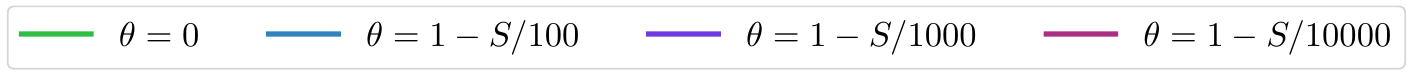} 
} \\[-0.75em]
\end{tabular}
}
\end{center}
\caption{\textbf{\algname{} training curves with varying moving average parameter.}
}
\label{fig:theta}       
\end{figure}

We also include final metric values in \tblref{tbl:theta}.

\begin{table}[h!]
\caption{\textbf{\algname{} final metrics with varying moving average parameter.}
}
\label{tbl:theta}
\small{
\begin{center}
\begin{tabular}{rrccc}
\toprule
$S$ & $\theta$ & \makecell{Final val. \\ accuracy (\%) }& \makecell{Final train \\ objective} & \makecell{Total \\ iterations} \\

\midrule

8 & $1-S/10$ & 94.0 & 0.153 & 5.75k \\
& $1-S/100$ & 94.1 & 0.154 & 5.78k \\
& $1-S/1000$ & 94.1 & 0.153 & 5.85k \\
& $1-S/10000$ & 94.1 & 0.147 & 6.45k \\

\midrule

32 & $0$ & 94.0 & 0.145 & 2.02k \\
& $1-S/100$ & 94.1 & 0.147 & 2.03k \\
& $1-S/1000$ & 94.1 & 0.145 & 2.08k \\
& $1-S/10000$ & 94.1 & 0.136 & 2.46k \\

\bottomrule
\end{tabular}
\end{center}
}
\end{table}

For the three smaller settings of $\theta$,
the results align very closely.
This suggests that \algname{}
is robust to the choice of $\theta$.
When $\theta = 1 - S / 10000$,
we see that smoothing more significantly biases
gain ratio estimates,
which leads to more contrasting results.

\section{Additional empirical results}
\label{app:additional_plots}

This appendix provides additional empirical results.

\subsection{Gain ratio estimation}

Our online gain ratio estimates align closely with offline estimates (computed by averaging over 1000 batches). 
\figref{fig:transformer_gain} demonstrates this for the \transformer{} task.

\begin{figure}
\begin{center}
\small{
\begin{tabular}{@{}p{0.12in}@{}c@{}c@{}}
& \hspace{0.0in} \transformer, $S$$=$16 &
\transformer, $S$$=$128 
 \\[-0.1em]
\rotatebox{90}{\hspace{0.17in} Gain ratio $\rt$}
& \includegraphics[width=1.148in]{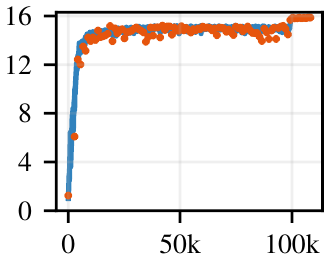} &
\includegraphics[width=1.196475in]{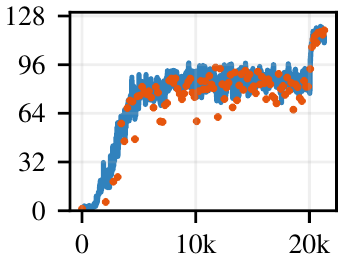} 
\\[-0.2em]
& \hspace{0.15in} Iteration $t$
& \hspace{0.15in} Iteration $t$
\\[-0.1em]
\multicolumn{3}{c}{
\includegraphics[width=1.71in]{plots/gain_ratios/legend.pdf} 
}
\end{tabular}
}
\end{center}
\vspace{-1.2em}
\caption{\textbf{Gain ratios for \transformer{}}.
Plots compare moving average $\rt$ estimates
to values computed offline (using 1000 batches).
}
\label{fig:transformer_gain}
\vspf
\end{figure}

\subsection{\cifar{} AdaScale training curves}

\figref{fig:cifar10_invariance_curves} shows additional plots for the \cifar{} task. Notably, the plots show training loss curves at various scales and full view of the learning rate curves.

\begin{figure}[t]
\begin{center}
\small{
\begin{tabular}{
    @{}
    >{\raggedleft\arraybackslash}p{0.11in}
    @{\hspace{0.01in}}
    c
    @{}
    >{\raggedleft\arraybackslash}p{0.14in}
    @{\hspace{0.01in}}
    c
    @{}
    >{\raggedleft\arraybackslash}p{0.14in}
    @{\hspace{0.01in}}
    c
    @{}
    >{\raggedleft\arraybackslash}p{0.14in}
    @{\hspace{0.01in}}
    c
    @{}
    }
\rotatebox{90}{\hspace{0.16in} Val. Acc (\%)} &
\includegraphics[width=1.25739in]{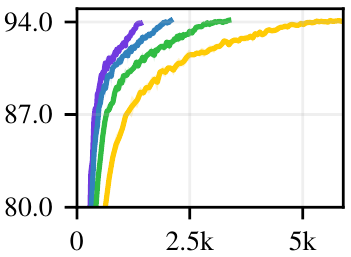} &
\rotatebox{90}{\hspace{0.16in} Val. Acc (\%)} &
\includegraphics[width=1.25739in]{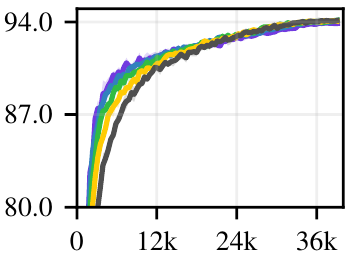} &
\rotatebox{90}{\hspace{0.16in} Training loss} &
\includegraphics[width=1.2076in]{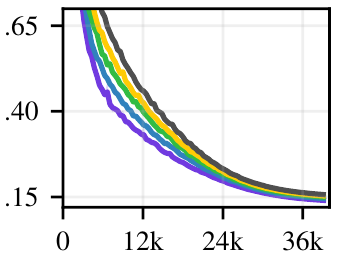} &
\rotatebox{90}{\hspace{0.10in} Learning rate $\etat$} &
\includegraphics[width=1.2076in]{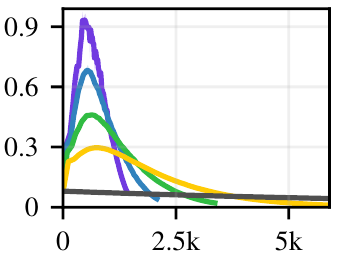} \\[-0.23em]
& {Iteration $t$} &
& {$S$-invariant iteration $\taut$} & 
& {$S$-invariant iteration $\taut$} & 
& {Iteration $t$} \\[-0.1em]
\multicolumn{8}{@{}c@{}}{
\includegraphics[width=5.5in]{plots/colors-legend.pdf} 
} \\[-1em]
\end{tabular}
}
\end{center}
\caption{\textbf{\algname{} training curves for \cifar{}.}
\algname{}
trains quality models at various scales.
}
\label{fig:cifar10_invariance_curves}       
\vspf
\end{figure}



\end{document}